\theoremstyle{plain}
\newtheorem{theorem}{Theorem}[section]
\newtheorem{proposition}[theorem]{Proposition}
\theoremstyle{definition}
\newtheorem{definition}[theorem]{Definition}
\newtheorem{assumption}[theorem]{Assumption}
\theoremstyle{remark}
\def\eqref#1{equation~\ref{#1}}
\def\Eqref#1{Equation~\ref{#1}}
\def\1{\bm{1}}
\DeclareMathAlphabet{\mathsfit}{\encodingdefault}{\sfdefault}{m}{sl}
\SetMathAlphabet{\mathsfit}{bold}{\encodingdefault}{\sfdefault}{bx}{n}
\def\gD{{\mathcal{D}}}
\def\gG{{\mathcal{G}}}
\def\gP{{\mathcal{P}}}
\def\gQ{{\mathcal{Q}}}
\def\gR{{\mathcal{R}}}
\def\gX{{\mathcal{X}}}
\def\gY{{\mathcal{Y}}}
\newcommand{\Rc}{\gR}
\newcommand{\Rhat}{\hat{\Rc}}
\newcommand{\Rwg}{\Rhat^{\text{WG}}}
\newcommand{\ltnormsq}[1]{\lVert #1 \rVert^2}
\newcommand{\xtr}{x^{\text{tr}}}
\newcommand{\ytr}{y^{\text{tr}}}
\newcommand{\Dtr}{\gD^{\text{tr}}}
\newcommand{\Dval}{\gD^{\text{v}}}
\newcommand{\Dtar}{\gD^{\text{tar}}}
\newcommand{\Dh}{\gD^{\text{tr-h}}}
\newcommand{\Drep}{\gD^{\text{tr-r}}}
\newcommand{\DL}{\gD^{\text{L}}}
\newcommand{\DU}{\gD^{\text{U}}}
\newcommand{\DUh}{\gD^{\text{U-h}}}
\newcommand{\DUr}{\gD^{\text{U-r}}}
\newcommand{\ifscore}{\mathcal{I}}
\newcommand{\ifvec}{\widetilde{\mathcal{I}}}
\newcommand\doublecheck{\checkmark\kern-0.5em\checkmark}
\newcommand{\xmark}{\ding{55}}%
\newcommand{\notcheckmark}{\checkmark\kern-1.1ex\raisebox{.7ex}{\rotatebox[origin=c]{125}{--}}}
\newcommand{\hessian}{H_{\thetahat^*, w}}
\newcommand{\thetahat}{\hat{\theta}}
\newcommand{\dtw}[1][]{
  \ifthenelse{\isempty{#1}}%
    {\frac{\dd \thetahat^*}{\dd w}}
    {\frac{\dd \thetahat^*}{\dd w_{#1}}}
}
\newcommand{\Ours}[1][]{\ifthenelse{\isempty{#1}}%
{\texttt{GSR}}
{\texttt{GSR-#1}}
}%
\newcommand\mapleapprox{\stackrel{\text{MAPLE}}{\approx}}
\newcommand{\CE}{\mathrm{CE}}
\newcommand{\squishlisttwo}{
 \begin{list}{$\bullet$}
  { \setlength{\itemsep}{1pt}
     \setlength{\parsep}{0pt}
    \setlength{\topsep}{0pt}
    \setlength{\partopsep}{0pt}
    \setlength{\leftmargin}{1em}
    \setlength{\labelwidth}{1.5em}
    \setlength{\labelsep}{0.5em} } }
\newcommand{\squishend}{
  \end{list}  }
\definecolor{forestgreen}{rgb}{0.13, 0.55, 0.13}
\title{Group-robust Sample Reweighting for Subpopulation Shifts via Influence Functions}
\author{Rui Qiao\textsuperscript{\normalfont 1\,2\,$\dagger~$}~~
Zhaoxuan Wu\textsuperscript{\normalfont 2}~~
Jingtan Wang\textsuperscript{\normalfont 1\,3}~~
Pang Wei Koh\textsuperscript{\normalfont 4}~~
Bryan Kian Hsiang Low\textsuperscript{\normalfont 1\,2} \\
\textsuperscript{1}National University of Singapore \quad 
\textsuperscript{2}Singapore-MIT Alliance for Research and Technology \\
\textsuperscript{3}Agency for Science, Technology and Research (A*STAR) \quad
\textsuperscript{4}University of Washington \\
\texttt{\{rui.qiao,jingtan.w\}@u.nus.edu \quad zhaoxuan.wu@smart.mit.edu} \\ \texttt{pangwei@cs.washington.edu \quad lowkh@comp.nus.edu.sg}
}
\begin{document}

\makeatletter
    \def\blankfootnote{\xdef\@thefnmark{}\@footnotetext}
    \makeatother

\maketitle

\begin{abstract}
Machine learning models often have uneven performance among subpopulations (a.k.a., groups) in the data distributions. This poses a significant challenge for the models to generalize when the proportions of the groups shift during deployment.
To improve robustness to such shifts, existing approaches have developed strategies that train models or perform hyperparameter tuning using the group-labeled data to minimize the worst-case loss over groups.
However, a non-trivial amount of high-quality labels is often required to obtain noticeable improvements.
Given the costliness of the labels, we propose to adopt a different paradigm to enhance group label efficiency:
utilizing the group-labeled data as a target set to optimize the weights of other group-unlabeled data.
We introduce \underline{G}roup-robust \underline{S}ample \underline{R}eweighting (\texttt{GSR}), a two-stage approach that first learns the representations from group-unlabeled data, and then tinkers the model by iteratively retraining its last layer on the reweighted data using influence functions.
Our \texttt{GSR} is theoretically sound, practically lightweight, and effective in improving the robustness to subpopulation shifts. 
In particular, \texttt{GSR} outperforms the previous state-of-the-art approaches that require the same amount or even more group labels. 
Our code is available at \url{https://github.com/qiaoruiyt/GSR}.
\end{abstract}
\blankfootnote{\textsuperscript{$\dagger$} Partial work done while the author was a visiting scholar at the University of Washington.}
\section{Introduction}
Subpopulation shift refers to the change in the proportion of groups (as defined by the class, attributes, or both) in data distribution between the training and deployment phases~\citep{koh2021wilds, yang2023change}. Due to the nature of subpopulations or the selection bias~\citep{dwork2012fairness, zadrozny2004learning}, the training data is often composed of majority and minority groups that are overrepresented and underrepresented, respectively. 
Common examples include photos of waterbirds frequently being taken with a water background instead of a land background~\citep{Sagawa2019DistributionallyRN} and people are more likely to be holding a vacuum than fixing it~\citep{nan2021interventional}.
Such group imbalance can cause some features from the majority group to be spuriously correlated with the labels, leading to possible undesirable shortcuts in training deep neural networks~\citep{geirhos2020shortcut, hermann2024foundations, Shah2020ThePO}. 
As a result, standard training with empirical risk minimization (ERM) often overly relies on such shortcuts and fails to attend to the minority groups, leading to worsened model generalization under subpopulation shifts, especially when the proportion of the minority groups increases~\citep{arjovsky2019invariant, Nagarajan2020UnderstandingTF, Sagawa2020AnIO}.
Thus, it is important to develop algorithms that are group-robust to subpopulation shifts, experiencing minimal performance degradation as measured by the worst-group accuracy. 

The group labels can be used to construct balanced groups or minimize the worst-group loss during training~\citep{idrissi2022simple, Sagawa2019DistributionallyRN}. 
These techniques often lead to improved group robustness compared to ERM. However, group labels can be expensive to obtain, necessitating approaches that maximize the gain in group robustness from limited data. Given a group-labeled dataset $\DL$, the two mainstream categories are either using $\DL$ for model selection~\citep{Liu2021JustTT, zhang2022correct}, or using $\DL$ to directly train the model parameters~\citep{kirichenko2022last}. However, these strategies may not always be the best way to exploit the group information, since they either underutilize the group labels, or restrict the most crucial part of the training to only $\DL$ while neglecting the potential of the more accessible data without group labels. Therefore, we advocate an in-between training paradigm: instead of using $\DL$ to train the model parameters directly, we use them to iteratively optimize the weights of a group-unlabeled dataset $\DU$, which are more realistic to obtain. Then, the model parameters are trained using a weighted ERM objective on these data. 

Optimizing the sample weights is an established idea that necessitates solving a bilevel optimization problem~\citep{ren2018learning}.
The inner loop of the bilevel optimization performs standard model training on a weighted objective, while the outer loop optimizes the sample weights.
The two main challenges hindering its wider adoption include: (1) the unavoidable trade-off between computational cost and accuracy, and (2) the effectiveness when the model is overparameterized.
For (1), calculating the outer loop gradient requires backpropagating through the entire training process, which is computationally prohibitive for deep neural networks.
Conventional approaches use a one-step truncated backpropagation that relies only on the last-step gradient of the model training for computational feasibility~\citep{shaban2019truncated, zhou2022model}. 
However, this approximation can be imprecise as it fails to account for the curvature of the loss landscape and the training dynamics of the model.
For (2), training a weighted objective with overparameterized neural networks is likely to converge to the same solution as with ERM~\citep{Sagawa2019DistributionallyRN, zhai2022understanding}.
Regularization techniques need to be applied such that training with weighted objectives leads to meaningfully different solutions than training with ERM~\citep{byrd2019effect, Sagawa2019DistributionallyRN, zhai2022understanding}.

To address the two challenges, we propose a two-stage method named \underline{G}roup-robust \underline{S}ample \underline{R}eweighting (\Ours{}), which iteratively reweights individual training samples to improve group robustness.
We utilize last-layer retraining (LLR), a lightweight method that retrains the last linear layer of neural network~\citep{kang2019decoupling, kirichenko2022last}.
LLR simplifies the inner loop of the bilevel optimization into a convex optimization problem.
Importantly, it facilitates our application of the influence function, a technique that can be derived from implicit differentiation~\citep{koh2017understanding, krantz2002implicit}, to utilize the Hessian to accurately estimate the gradient of sample weight updates 
without backpropagating through the entire training trajectory.
In contrast to methods that rely on one-step truncated backpropagation approximations~\citep{zhou2022model}, our approach leverages the fact that LLR is both inexpensive even with Hessian computation and effective for enhancing group robustness.
To perform \Ours, we split the group-unlabeled dataset $\DU$ into a held-out set $\DUh$ and a remaining set $\DUr$.
The first stage performs unweighted representation learning on $\DUr$.
The second stage utilizes the influence function to iteratively optimize the weights of $\DUh$ for the worst-group loss in $\DL$ achieved through LLR.
Our method achieves an average improvement of 1.0\% in terms of absolute worst-group accuracy as compared to the state-of-the-art method that uses the same amount of group labels as ours and outperforms methods that require more group labels.

The specific contributions of this work include the following:
\begin{itemize}[leftmargin=*, itemsep=0pt]
\vspace{-2mm}
    \item We propose to better leverage the high-quality group-labeled data for group robustness with an alternative paradigm by using them to reweight other samples instead of directly training on them. 
    \item We devise an efficient strategy based on implicit differentiation for group-robust sample reweighting, which becomes accurate and computationally feasible via the synergy with last-layer retraining. 
    \item We empirically demonstrate the performance advantages of our lightweight approach on improving the group robustness for both vision and natural language tasks. 
\end{itemize}

\section{Problem Formulation and Preliminaries}
\label{sec:formulation}
Let ($\gX, \gY, \gG$) denote the space of input, class label, and groups of subpopulation respectively. 
Denote a dataset by 
$\gD=\{z_1, \dots, z_n\}$,
where each data point $z_i \coloneq (x_i, y_i, g_i) \in \gX \times \gY \times \gG $ is sampled i.i.d. from a data distribution $\gP$. 
For $g\in \gG$, define $\gD_g \coloneq \{z_i \in \gD | g_i=g \}$ as the subset of data from group $g$, sampled from the group data distribution $\gQ_g$. 
We assume that $\gP$ can be decomposed into $\gP=\sum_{g \in \gG} \pi_g\gQ_g$ where the mixing ratio $\pi_g \in [0, 1]$, $\sum_{g \in \gG} \pi_g = 1$. 
As a result, any subpopulation shifts can be represented by adjusting $\pi_g$. 
Depending on the group-label availability, datasets can be categorized as $\DL$, $\DU$ for group-labeled, group-unlabeled datasets. 
Meanwhile, depending on the functionality, datasets can be categorized as $\Dtr, \Dval, \Dtar$, denoting the training, validation, target sets, respectively.
The target set $\Dtar$ guides the sample weight updates and we reserve the notion of validation set $\Dval$ only for hyperparameter and model selection.  
Let $N = \{1, \dots, n\}$ be the indices of the training set.
We use $\dd$ for the total derivative and use $\partial$ or $\nabla$ for the partial derivative.

ERM optimizes the model parameters $\theta$
with $\Rhat(\Dtr; \theta) =  \sum_{i\in N}\frac{1}{n} \ell(x_i, y_i; \theta)$
where $\ell$ is the loss function. 
Let $w \in \mathbb{R}^n$ be the weight vector for all training samples. 
A weighted training objective assigns different importance to the training data points,
$\Rhat(\Dtr; w, \theta) = \sum_{i \in N} w_i \ell(x_i, y_i; \theta)$.
Since the change of $\pi$ is not known at test time, we evaluate the robustness to distribution shifts of a model with parameters $\theta$ using the worst-group risk~\citep{Sagawa2019DistributionallyRN}: 
\[ \Rwg(\gD; \theta) = \max_{g \in \gG} \Rhat(\gD_g; \theta)\ .\]
To minimize the worst-group risk, existing methods \citep{kirichenko2022last, Sagawa2019DistributionallyRN} have focused on directly training on group-annotated dataset $\DL$ 
using the objective $\Rwg(\DL; \theta)$. 
However, they require having a sufficient amount of data for each group. 
Alternatively, we can assume the availability of some group-unlabeled data $\DU$ (e.g., standard training set). 
By viewing the sample weights of $\DU$ as another set of parameters and optimizing it jointly with $\theta$ w.r.t. the target set $\Dtar \subseteq \DL$, we arrive at a bilevel formulation of the minimax problem~\citep{zhou2022model}:
\begin{equation}\label{eq:bilevel-formulation}
\begin{aligned}
    &\min_{w \in S} \max_{g \in \gG} \Rhat(\Dtar_g; \thetahat^*) \\
    \mathrm{s.t. }~ &\thetahat^* = \arg \min_{\theta} \hat{\gR}(\Dtr; w, \theta)\ ,
\end{aligned}
\end{equation}
where $S = \{w = [w_1, w_2, \cdots, w_n] \in \mathbb{R}^n | w_i \geq 0 \, \forall i \in N, \text{ and } \sum_{i\in N} w_i = 1\}$.
The inner loop performs standard model training on the weighted objective with variants of gradient descent.  
To optimize the sample weights of the training set in the outer loop, we can similarly perform gradient descent using its total derivative:  
\begin{align*}
    \frac{\dd \Rhat(\Dtar; \thetahat^*)}{\dd w} &= \nabla_{\theta} \Rhat(\Dtar; \thetahat^*) \frac{\dd \thetahat^*}{\dd w} \ .
\end{align*}
Since $\thetahat^*$ is typically obtained via gradient descent, calculating $\dtw$ 
involves unrolling the entire training trajectory and backpropagating through it, which is overly expensive to compute especially when the model is an overparameterized deep neural network. To efficiently approximate $\dtw$, MAPLE \citep{zhou2022model} adopts one-step truncated backpropagation~\citep{shaban2019truncated}:
\begin{equation} \label{eq:maple}
\dtw 
\stackrel{\text{one-step}}{\approx}
\nabla_w \theta_T = -\eta \frac{\partial^2 \Rhat(\Dtr; w, \theta_{T-1})}{\partial \theta \partial w} \ , 
\end{equation}
where $\eta$ is the learning rate. 
Subsequently, $w$ is updated with projected gradient descent. 
This approximation essentially relies on the last-step gradient w.r.t. each training instance $z_i$, i.e., $\nabla_{w_i} \theta_T=-\eta \nabla_{\theta}\Rhat(z_i; \theta_{T-1}) \approx -\eta \nabla_{\theta}\Rhat(z_i; \theta_{T})$, derived in Appendix~\ref{app:last-step-grad}.
After dropping $\eta$, we can rewrite: 
\begin{equation} \label{eq:maple-rewrite}
    \frac{\dd \Rhat(\Dtar; \thetahat^*)}{\dd w_i} \mapleapprox - \nabla_{\theta} \Rhat(\Dtar; \theta_T)^\top \nabla_{\theta}\Rhat(z_i; \theta_{T})\ .
\end{equation} 
To interpret, MAPLE essentially updates sample weights according to the inner product between the gradient of the target loss and the gradient of the unweighted training loss for each sample upon convergence. 
Performing bilevel optimization with this approximation is reasonable because it tends to upweight training points that share similar gradient directions with the target sets. 
However, it is also imprecise, as it neither accounts for the curvature of the loss landscape w.r.t. the current model nor fully captures the training dynamics, where $w_i$ could significantly affect the batch gradient and optimization trajectory. 
Reweighting data points based solely on last-step gradients is shortsighted and can lead to suboptimal convergence. 

\section{Group-robust Sample Reweighting via Implicit Differentiation}\label{sec:reweight-implicit}

Our goal is to develop a technique that updates the sample weights more accurately and efficiently to optimize the bilevel minimax objective in~\Eqref{eq:bilevel-formulation}. 
In this section, we first establish the connection between reweighting samples via implicit differentiation and the influence function~\citep{koh2017understanding} from the perspective of bilevel optimization. Then, we present an effective integration of the influence function and adaptive aggregation~\citep{Sagawa2019DistributionallyRN} over the groups to optimize the minimax objective in the outer loop.

Implicit differentiation can be used to calculate the gradient w.r.t.~the sample weights without backpropagating through the inner loop of the objective according to the implicit function theorem (IFT)~\citep{krantz2002implicit}.
Assuming an unconstrained and strictly convex inner objective, it implies that (1) the gradient w.r.t.~the model parameters $\theta$ diminishes to zero upon convergence to the inner loop optimum, i.e., $\nabla_\theta \Rhat(\Dtr;w,\thetahat^*)=0$, and (2) the Hessian of the weighted training objective evaluated at $\theta = \thetahat^*$, $\hessian \coloneq \nabla_\theta^2 \Rhat(\Dtr;w,\hat{\theta}^*)$, is invertible~\citep{boyd2004convex}. This satisfies two of the conditions required to apply the IFT. In addition, IFT further requires the inner objective to be twice continuously differentiable.
We formally state the assumptions below.
\begin{assumption} \label{assump:inner}
    The inner objective $\Rhat(\Dtr;w, \theta)$ is unconstrained, strictly convex, and twice continuously differentiable with respect to $\theta$, $\forall w \in S$ (\Eqref{eq:bilevel-formulation}).
\end{assumption}
Although appearing as restrictive, these assumptions can be easily satisfied through last-layer retraining, as detailed in Section~\ref{sec:reweighting-with-last-layer-retraining}. 
Under Assumption~\ref{assump:inner}, 
as $\nabla_\theta \Rhat(\Dtr;w,\thetahat^*)=0$, the outer gradient $\dtw = \left[\dtw[1]\,\, \dtw[2]\,\, \cdots \,\, \dtw[n] \right]^\top \in \mathbb{R}^{n\times1}$ can be calculated \textit{exactly} via implicit differentiation as:
\begin{align} 
\dtw &= - \hessian^{-1} \nabla_w \nabla_\theta \Rhat(\Dtr;w,{\thetahat^*})\ , \label{eq:dtdw} \\
\dtw[i] &= - \hessian^{-1} \nabla_\theta \Rhat(z_i;{\thetahat^*}) = - \hessian^{-1} \nabla_\theta \ell(z_i; {\thetahat^*}) \ . \label{eq:dtdwi}
\end{align} 
The derivation is in Appendix~\ref{app:meta-grad}. 
To highlight, \Eqref{eq:dtdwi} uses the first-order gradient of the \emph{unweighted} ERM loss which is independent of $w_i$. 
Hence, the magnitude of $w_i$ does \emph{not} directly affect the scale of its gradient, even if $|w_i| \rightarrow 0$. 
Instead, $w_i$ \emph{indirectly} affects the gradients through the trained model parameters $\thetahat^*$ and the Hessian $\hessian$. 
The total derivative of $\Rhat$ w.r.t. $w_i$ is thus:
\begin{equation} \label{eq:drdwi}
    \frac{\dd \Rhat(\Dtar; \thetahat^*)}{\dd w_i} = -\nabla_{\theta} \Rhat(\Dtar; \thetahat^*)^\top \hessian^{-1} \nabla_\theta \Rhat(z_i; {\thetahat^*})  \ .
\end{equation}
In comparison to \Eqref{eq:maple-rewrite}, the key difference lies in the presence of the inverse Hessian term between the two gradients. 
The Hessian $\hessian$ describes the loss landscape of $\thetahat^*$ w.r.t. the training dataset $\Dtr$ and their sample weights $w$. 
A more detailed discussion on the role of Hessian is available in Appendix~\ref{app:hessian}. 
We will show later in Section~\ref{Sec:experiments} that without the Hessian, the last-step approximated gradient (\Eqref{eq:maple-rewrite}) of the outer objective function is inaccurate, often leading to suboptimal results.

The derived form in \Eqref{eq:drdwi} via implicit differentiation exactly matches the influence function, which estimates the change in model parameters if a training data point $z_i$ is upweighted infinitesimally ~\citep{cook1982residuals, koh2017understanding}. 
Since the influence function is well-defined for weighted objectives, we slightly generalize the notation by incorporating $w$ into the expressions. Given a model $\thetahat^*$ trained on weighted samples, the influence of upweighting $z_i \in \Dtr$ on the loss of model predicting a target instance $z' \in \Dtar$ (or a target set $\Dtar$) is
\begin{equation}
\mathcal{I}(z_i, z';\thetahat^*, w) \coloneqq -\nabla_\theta \Rhat(z';{\thetahat^*})^\top \hessian^{-1} \nabla_\theta \Rhat(z_i;{\thetahat^*})\ .
\end{equation}
Thus, we can use the influence function to represent the gradient derived from implicit differentiation.
For completeness, we define the \emph{per-sample} influence of a dataset to be $\ifvec(\Dtr, z';\thetahat^*, w) \coloneq \left[ \ifscore(z_1, z';\thetahat^*, w) \,\,\, \ifscore(z_2, z';\thetahat^*, w) \,\,  \cdots \,\, \ifscore(z_n, z';\thetahat^*, w)\right]^\top \in \mathbb{R}^{n \times 1}$.
Then, we can calculate the sample weight update based on the per-sample influence in $\Dtr$ on each group $g \in \gG$ in $\Dtar$:
\begin{align} \label{eq:persample-inf-dtar}
    \ifvec(\Dtr, \Dtar_g;\thetahat^*, w) &= \frac{1}{|\Dtar_g|} \sum_{z' \in \Dtar_g} \ifvec(\Dtr, z';\thetahat^*, w) \ .
\end{align}
To optimize the minimax objective in the outer loop, the gradient of the worst-group risk w.r.t. the sample weights can be calculated via its corresponding influence scores:
\begin{align} \label{eq:infwg-prior}
    \frac{\dd \Rhat^{\text{WG}}(\Dtar; \thetahat^*)}{\dd w} 
    = \ifvec(\Dtr, \Dtar_{g^*};\thetahat^*, w), \quad g^* = \arg\max_{g \in \gG} \Rhat(\Dtar_g; \thetahat^*)\ .
\end{align}
However, only optimizing for the worst group in one step according to~\Eqref{eq:infwg-prior} can be inefficient, especially when multiple groups have high error rates. To obtain a more efficient and smoother optimization trajectory, we utilize an adaptive aggregation~\citep{Sagawa2019DistributionallyRN} of the influence scores across groups. The aggregation weights are updated multiplicatively based on group error rates to better optimize the sample weights, illustrated in lines 7-14 of Algorithm~\ref{alg:ours}.  

\section{Exact Group-robust Sample Reweighting with Last-layer Retraining}\label{sec:reweighting-with-last-layer-retraining}

In this section, we introduce the \underline{G}roup-robust \underline{S}ample \underline{R}eweighting with last-layer retraining (\Ours) algorithm.
The algorithm is built on the observation that last-layer retraining (LLR)~\citep{kang2019decoupling, kirichenko2022last} can lead to a strongly convex (a subset of strictly convex) inner objective that satisfies the assumptions required for implicit differentiation.
Next, we delineate the benefits of LLR and its integration into our \Ours{} algorithm.

\subsection{Synergy with Last-layer Retraining}

LLR is a lightweight method designed to mitigate spurious correlations and improve group robustness~\citep{kirichenko2022last, labonte2024towards, qiu2023simple}.
In essence, it performs deep representation learning with ERM, followed by retraining only the last layer on a separate group-balanced dataset while freezing the remaining model parameters.
{Empirical evidence has shown that the initial ERM model has learned sufficient representations for all groups, including minority groups, and over-reliance on spurious correlations can be significantly reduced by simply fine-tuning} the last linear classification layer of the model~\citep{izmailov2022feature, rosenfeld2022domain}.
As a result, despite being lightweight, last-layer retraining while freezing the ERM-learned representation has become the state-of-the-art approach for improving group robustness across various settings.

Notably, LLR synergies well with our proposed sample reweighting via implicit differentiation (Section~\ref{sec:reweight-implicit}), offering three significant benefits.
Firstly, employing LLR with cross-entropy loss and a positive coefficient $\lambda$ for $\ell_2$ regularization creates a strongly convex objective that satisfies the Assumption~\ref{assump:inner}  {(proofs are in Appendix~\ref{app:proof-convexity})}.
This allows the exact calculation of the gradients w.r.t. the sample weights via \Eqref{eq:infwg-prior}. 
Secondly, calculating the Hessian inverse in \Eqref{eq:drdwi} is no longer overly expensive, because the only free model parameters are in the last layer instead of the entire deep neural network. 
Thirdly, LLR acts as a regularizer by limiting the capacity of the model parameters, which counterintuitively facilitates the training with weighted objectives. This is because weighted objectives do not necessarily yield different predictors compared to ERM when training overparameterized models as shown in \citet{zhai2022understanding}. Sufficient regularization is usually needed for the reweighting scheme to be practically meaningful~\citep{byrd2019effect, Sagawa2019DistributionallyRN}.
Therefore, regularized LLR ensures that the theoretical soundness of our method translates into practical benefits without violating assumptions or conducting extensive approximations.

\subsection{\texorpdfstring{Our \Ours{} Algorithm}{Our GSR algorithm}}
A comprehensive overview of \Ours{} is presented in Algorithm~\ref{alg:ours}. 
For simplicity, we omit certain details such as regularization and gradient clipping. Specifically, it contains two stages:

\textbf{(Stage 1) Representation learning.} We take out a random subset (e.g., 10\%) from the training set as a held-out set $\Dh$.
Then, train the entire model $\theta$ on the remaining training set $\Drep$ with ERM until convergence without early stopping.
This ensures that the model fits the ``remaining'' training set $\Drep$ well.
Reserving a held-out set $\Dh$ that the model has not encountered during training is a crucial step, as it prevents overfitting to the data that will be later used for sample reweighting.
Otherwise, changing the sample weights will not make a meaningful difference to the last-layer classifier~\citep{zhai2022understanding}.
Note that neither group labels nor sample weights are required at this stage.

\textbf{(Stage 2) Group-robust sample reweighting with last-layer retraining.} 
Denote the model parameters as $\theta = (\phi, \psi)$ where $\phi$ is the feature extractor and $\psi$ is the last-layer linear classifier. 
In this stage, we keep the feature extractor $\phi$ fixed, and jointly train the last layer $\psi$ and the sample weights $w$ using the held-out set $\Dh$ (as a training set) and the target set $\Dtar$.
Subsequently, we perform model selection with the validation set $\Dval$.
We create $\Dtar$ and $\Dval$ with equal size, by splitting the initial ``standard'' validation set (as $\DL$) in half (see remark below for the reason). 
The inner loop, which fits a linear classifier $\psi$ on the weighted cross-entropy objective with $\ell_2$ regularization, is optimized via L-BFGS~\citep{liu1989limited} for efficiency. 
The outer loop performs projected gradient descent with learning rate $\beta$. The gradient is calculated using adaptive aggregation of the influence scores (\Eqref{eq:infwg-prior}), projected to $\{w | w_i \geq 0 \, \forall i \}$. The Hessian of the inner objective is updated as $H_{{\psi}^{(t)},w^{(t)}} = \nabla_\psi^2 \hat{\mathcal{R}}({\cal D}^\mathrm{tr-h}; w^{(t)},{\psi}^{(t)})$ in each iteration.
In addition, the gradient of the sample weights can become spiky when high loss values are encountered on the validation set. 
We employ gradient clipping by norm and weights normalization for better training stability. 

\textbf{Remark.} 
The target set and the validation set cannot be the same. 
Although the target data is only used to update the weights of the held-out training data and does not directly affect the training of the model parameters, overfitting to the target can still occur as a result of the representer theorem~\citep{scholkopf2001generalized}. 
Thus, it is essential to create different splits for the target and validation set to prevent overfitting during sample reweighting, which we illustrate later in Figure~\ref{fig:val-vs-test}.

\begin{algorithm}[t]
\caption{\underline{G}roup-robust \underline{S}ample \underline{R}eweighting with last-layer retraining (\Ours)}
\label{alg:ours}
\begin{algorithmic}[1]
\Require Training set $\Dtr$ of size $n$, validation set $\Dval$, target set $\Dtar$ with $m$ groups, held-out set fraction $\alpha$, outer loop steps $T$, outer learning rate $\beta$, scaling temperature $\tau$.
\State Obtain the held-out set $\Dh$ with size $\alpha n$ and the remaining set $\Drep$ with size $(1-\alpha) n$

\noindent \textbf{Stage 1:} 
\State $(\phi^*, \psi^*) \gets \arg \min_{\theta=(\phi, \psi)} \Rhat(\Drep; \theta)$

\noindent \textbf{Stage 2:} 
\State Initialize held-out sample weights $w^{(1)} \gets \frac{1}{\alpha n} \cdot \mathbf{1}^{\alpha n}$
\State Initialize adaptive influence score weights $\gamma^{(0)} \gets \frac{1}{m} \cdot \mathbf{1}^m$
\For{$t = 1, \dots, T$} 
        \State $\psi^{(t)} \gets \arg \min_{\psi} \Rhat(\Dh; w^{(t)}, (\phi^*, \psi) )$ 
        \Comment{Last-layer retraining}
        \For{$g = 1, \dots, m$}
        \State $\gamma^{(t)}_g \gets \gamma^{(t-1)}_g \exp\left(\Rhat(\Dtar_g; (\phi^*, \psi^{(t)}))/\tau\right)$ \Comment{Update aggregation weights}
        \EndFor
        \State $\gamma^{(t)} \gets \gamma^{(t)} / \lVert 
    \gamma^{(t)} \rVert_1$
        \State $\xi^{(t)} \gets \mathbf{0}^{\alpha n}$
        \For{$g = 1, \dots, m$}
        \State $\xi^{(t)} = \xi^{(t)} + \gamma^{(t)}_g \ifvec(\Dh, \Dtar_g;{(\phi^*, \psi^{(t)})}, w)$ \Comment{Adaptive aggregation of the influence}
        \EndFor
        \State $w^{(t+1)} \gets \max\{ w^{(t)} - \beta \xi^{(t)}, \mathbf{0}^{\alpha n} \}$ \Comment{Projected gradient descent (element-wise max)}
        \State $w^{(t+1)} \gets w^{(t+1)}/ \lVert w^{(t+1)} \rVert_1$
        \Comment{Weight normalization}
        \If{$\Rwg(\Dval; (\phi^*, \psi^{(t)}) ) \leq \Rwg(\Dval; (\phi^*, \psi^*))$}
            \State $\psi^* \gets \psi^{(t)}$ \Comment{Model selection}
        \EndIf
    \EndFor
\State \Return $\theta^* \gets (\phi^*, \psi^*)$
\end{algorithmic}
\end{algorithm}

\section{Experiments}\label{Sec:experiments}

\textbf{Datasets.} 
We evaluate the effectiveness of algorithms on 4 commonly used datasets.
\textit{Waterbirds} 
\citep{wah2011cub}
is a binary object recognition dataset for bird types (i.e., waterbird, landbird), which are spuriously correlated with the background (i.e., water, land). 
\textit{CelebA} 
\citep{liu2015deep}
is a binary object recognition dataset for hair color blondness prediction. 
There exists a spurious correlation between the man gender attribute and non-blondness.
\textit{MultiNLI} 
\citep{williams2017broad}
is a multi-class natural language inference dataset. The three classes (i.e., entailment, neutral, contradiction) describe the relationship between a pair of sentences. The spurious correlation exists between contradiction and the presence of negation words. 
\textit{CivilComments (WILDS)} \citep{borkan2019nuanced, koh2021wilds} is a binary text toxicity detection dataset. 
There are 8 types of identities mentioned in the text, such as male and female. Grouping the samples by the identity and the class results in 16 overlapping groups. Although there are no obvious spurious correlations, the data is extremely imbalanced among these groups, especially on text that mentions other religions and is classified as non-toxic. 
Overall, we follow the standard train, validation, and test splits for all datasets. We randomly create target and validation sets by equally splitting the original validation. 

\textbf{Baselines.} 
We compare our approach against baselines in three categories according to the amount of group labels required. 
(1) Group labels for both training and validation:
{Group DRO}~\citep{Sagawa2019DistributionallyRN}  optimizes the worst-group training loss by dynamically adjusting the group weights; 
{RWG}~\citep{idrissi2022simple} balances the sampling probability of each group according to their sizes.
(2) Group labels for validation: 
{JTT}~\citep{Liu2021JustTT} upweights the high-loss training points that are more likely to be from minority groups.
{CnC}~\citep{zhang2022correct} in addition uses contrastive learning to align the representations of the same-class samples. 
{SSA}~\citep{nam2022spread} infers the pseudo group labels of the training set by training a predictor on the validation set. 
{DFR}~\citep{kirichenko2022last} performs group-balanced last-layer retraining on the validation set. 
{MAPLE}\footnote{We re-implemented MAPLE with better efficiency and support for MultiNLI and CivilComments.}~\citep{zhou2022model} is the most similar to ours that uses the validation set to jointly reweight the training samples and retrain the entire model. 
(3) No group labels:
{SELF}~\citep{labonte2024towards} constructs an approximately group-balanced dataset based on the disagreement from an auxiliary model, then performs class-balanced last-layer retraining. 
We choose the best-performing ES disagreement SELF.

\textbf{Setup.}
For \Ours{}, at stage 1, we
use the suggested hyperparameter configuration and data augmentation strategies from DFR on all datasets (with minor modifications for MultiNLI and CivilComments).
At stage 2,
we drop data augmentation and perform a randomized search of hyperparameters using the valuation set $\Dval$.
The details are documented in Appendix~\ref{app:hyper}.

\subsection{Improvement in Group Robustness}

\begin{table}[t]
\small
\caption{
\textbf{Performance comparison.} We report the worst-group accuracy on four benchmark datasets. For the group label column, 
\xmark~indicates no group label $g$ is used; \checkmark~indicates $g$ is required; 
\doublecheck{} indicates a proportion of $g$ from the validation set is re-purposed beyond hyperparameter tuning, such as training sample weights or model parameters. 
The row sections are ranked in descending order of the total amount of group label information required.
We report the mean $\pm$ standard deviation over 5 random seeds.
We use ``$-$'' to indicate missing evaluation results from the original paper.
}
\begin{center}
\begin{tabular}{c c c c c c c}
\hline\\[-3mm]
\multirow{2}{*}{Method} & Group label & \multicolumn{4}{c}{Worst-group accuracy (\%)} & \multirow{2}{*}{Average}\\
\cline{3-6}
\\[-3mm]
& Train / Val & Waterbirds & CelebA & MultiNLI & CivilComments & 

\\\hline\\[-3mm]



Group DRO & 
\checkmark / \checkmark 
& $91.4_{\pm 1.1}$
& $88.9_{\pm 2.3}$
& $77.7_{\pm 1.4}$
& $70.0_{\pm 2.0}$ 
& $82.0$
\\

RWG & 
\checkmark / \checkmark 
& $87.6_{\pm1.6}$ 
& $84.3_{\pm1.8}$ 
& $69.6_{\pm1.0}$ 
& $72.0_{\pm1.9}$ 
& $78.4$
\\

\hline\\[-3mm]

JTT & 
\xmark / \checkmark 
& $86.7$ 
& $81.1$
& $72.6$
& $69.3$
& $77.4$
\\

CnC & 
\xmark / \checkmark 
& $88.5_{\pm0.3}$
& $88.8_{\pm0.9}$
& $-$
& $68.9_{\pm2.1}$
& $-$
\\

\hdashline\\[-3mm]

SSA & 
\xmark / \doublecheck
& $89.0_{\pm0.6}$
& $\mathbf{89.8}_{\pm1.3}$
& $76.6_{\pm0.7}$
& $69.9_{\pm2.0}$
& $81.3$
\\

MAPLE & 
\xmark / \doublecheck
& $91.7$
& $ 88.0$
& {$72.7$}
& {$64.1$}
& {$79.1$}
\\

DFR & 
\xmark /  \doublecheck
& $\mathbf{92.9}_{\pm0.2}$
& $88.3_{\pm1.1}$
& $74.7_{\pm0.7}$
& $70.1_{\pm0.8}$
& $81.5$
\\

\texttt{GSR-HF} &
\xmark / \doublecheck
& $87.5_{\pm 0.1}$ 
& $86.3_{\pm 0.4}$
& $74.7_{\pm 0.0}$ 
& $68.9_{\pm 0.2}$ 
& $79.4$
\\

 \Ours{} & 
\xmark /  \doublecheck
&$\mathbf{92.9}_{\pm 0.0}$ 
&$87.0_{\pm 0.4}$ 
&$\mathbf{78.5}_{\pm 0.3}$
&$\mathbf{71.7}_{\pm 0.6}$ 
& $\mathbf{82.5}$

\\\hline\\[-3mm]

ERM & 
\xmark / \xmark 
& $74.9_{\pm2.4}$
& $46.9_{\pm2.8}$
& $65.9_{\pm0.3}$
& $55.6_{\pm0.6}$
& $60.8$
\\

SELF &
\xmark / \xmark
& $\mathbf{93.0}_{\pm0.3}$
& $83.9_{\pm0.9}$
& $70.7_{\pm2.5}$
& $-$
& $-$
\\

\hline
\end{tabular}
\end{center}
\label{tab:main_exp}
\end{table}

The main results are in Table~\ref{tab:main_exp}.
Our approach \Ours{} achieves consistent and competitive results compared to the baseline methods in terms of group robustness, which is measured by the worst-group accuracy.
Specifically, \Ours{} has the state-of-the-art (SoTA) performance for both the MultiNLI and CivilComments datasets, and close-to-SoTA worst-group accuracy on the Waterbirds and CelebA datasets. 
Moreover, \Ours{} demonstrated the highest consistency across all four datasets.
It achieves an average improvement of 1.0\% in absolute worst-group accuracy compared to DFR (the SoTA method which uses the same amount of group labels as ours), because \Ours{} allows more fine-grained reweighting over the training data. \Ours{} even outperforms Group DRO which requires more group labels, indicating that training the full network with weighted data may not always be necessary.
The detailed results including the average accuracy of the methods are in Appendix~\ref{app:detail}.

Besides, as an ablation study, we include the Hessian-free version of our method, \Ours[HF], which uses 
a simplified sample weight update from MAPLE (\Eqref{eq:maple-rewrite}) during the last-layer retraining.
As shown in Table~\ref{tab:main_exp}, \Ours[HF] performs consistently worse than \Ours{}, validating our analysis in Section~\ref{sec:formulation}.
This highlights the importance of incorporating the Hessian matrix in \Eqref{eq:drdwi} to ensure the correctness of the gradient when optimizing the sample weights. 

\subsection{A Closer Look at Sample Weights}

\textbf{Varying Weights Across Groups.}
Figure~\ref{fig:exp-group-weights-step} illustrates the variation in the sum of sample weights across different groups through the training process of the best-performing model reported in Table~\ref{tab:main_exp}. 
In general, all the minority-group weights increase, and the majority groups with spurious correlations generally have decreased weights.
For example, the weights for minority groups (e.g., landbirds in water in Figure~\ref{fig:wb-group-weights}) increase as optimization progresses.
In contrast, the weights for certain majority groups with clear spurious correlations (e.g., landbirds on land in Figure~\ref{fig:wb-group-weights}) decrease over the optimization steps.
Note that not all majority groups necessarily experience a decrease in their sum of sample weights.
Another key message here is that the better-performing models are not necessarily trained with equally weighted groups. 
In all cases, the worst-group performance is better optimized even though the groups are weighted differently, which effectively mitigates the spurious correlations with still ``imbalanced'' data.
These results obtained from directly optimizing the sample weights align well with the similar findings in~\cite {qiu2023simple, Sagawa2019DistributionallyRN}. 

\begin{figure}[t]
\centering
    \begin{subfigure}[b]{0.24\columnwidth}
    \centering	
    \includegraphics[width=\columnwidth]{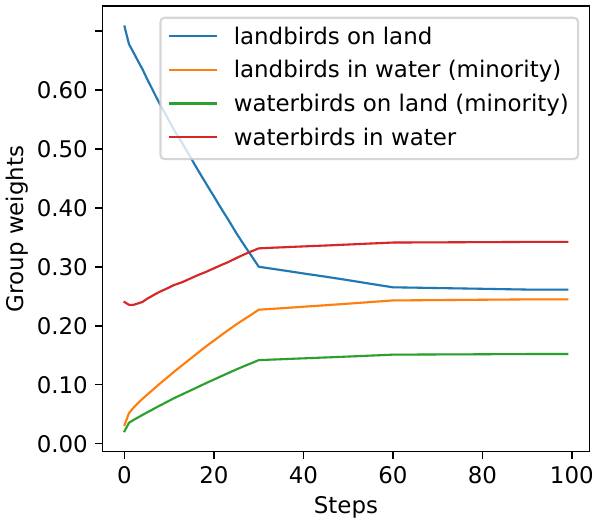}
    \caption{Waterbirds}
    \label{fig:wb-group-weights}
    \end{subfigure}
    \begin{subfigure}[b]{0.24\columnwidth}
    \centering	
    \includegraphics[width=\columnwidth]{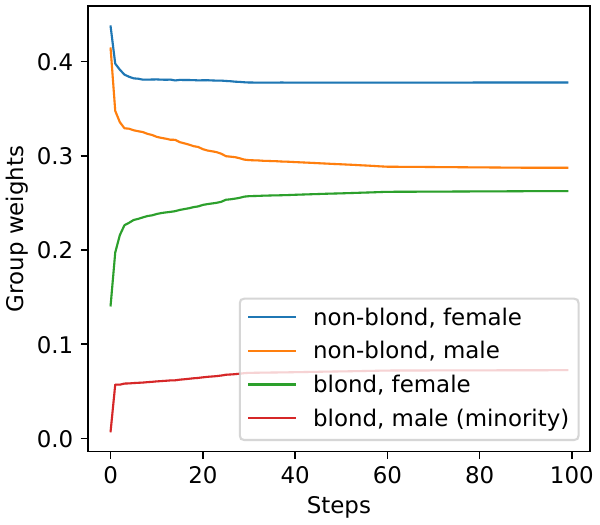}
    \caption{CelebA}
    \label{fig:celeba-group-weights}
    \end{subfigure}
    \begin{subfigure}[b]{0.24\columnwidth}
    \centering	
    \includegraphics[width=\columnwidth]{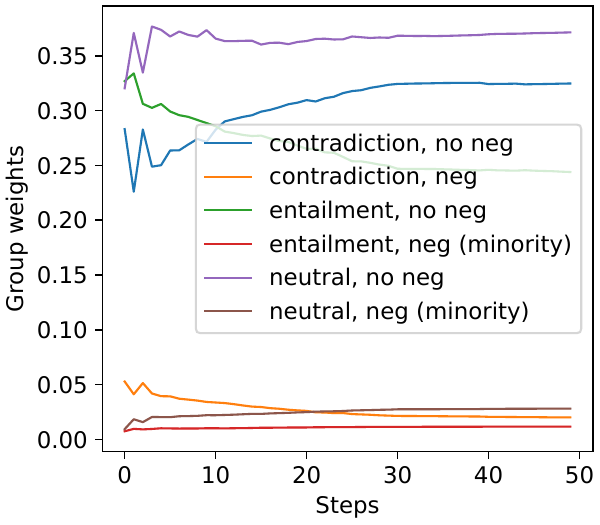}
    \caption{MNLI}
    \label{fig:mnli-group-weights}
    \end{subfigure}
    \begin{subfigure}[b]{0.24\columnwidth}
    \centering	
    \includegraphics[width=\columnwidth]{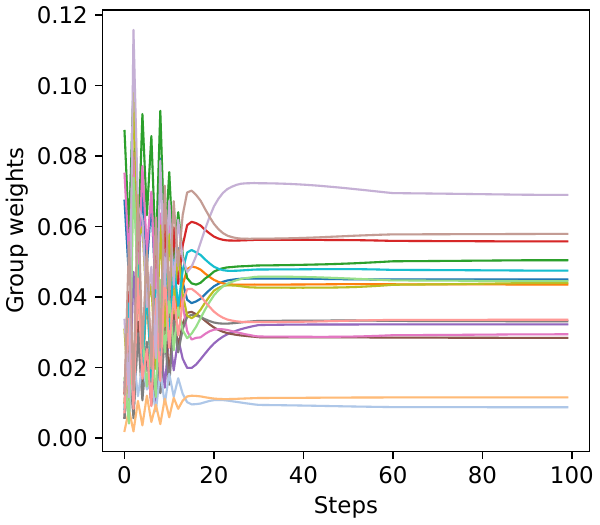}
    \caption{CivilComments}
    \label{fig:civil-group-weights}
    \end{subfigure}
    \caption{The change in the sum of sample weights across different groups throughout the training. The minority groups are upweighted and the majority groups are generally downweighted. However, different groups do not have equal sums of weights.}
    \label{fig:exp-group-weights-step}
\end{figure}

\textbf{Varying weights within groups.}
We visualize the distribution of the sample weights that are used to train the best models in Figure~\ref{fig:exp-weights-dist} (where the distributions are smoothed using kernel density estimation). 
The samples from the majority groups are consistently assigned with lower weights after reweighting, especially for Waterbirds and CelebA in Figures~\ref{fig:wb-weights}, \ref{fig:celeba-weights}. 
On the other hand, the minority groups are consistently upweighted with their sample weights stretched out to larger values across all datasets. 
We can also clearly see that our best-performing models are trained on samples with diverse weights, even if they are from the same group. This implies that not all samples need to be upweighted or downweighted for group-balanced training. Having more fine-grained adjustments over the sample weights offers more flexibility in improving the robustness to subpopulation shifts.
\begin{figure}
\centering
    \begin{subfigure}[b]{0.24\columnwidth}
    \centering	
    \includegraphics[width=\columnwidth]{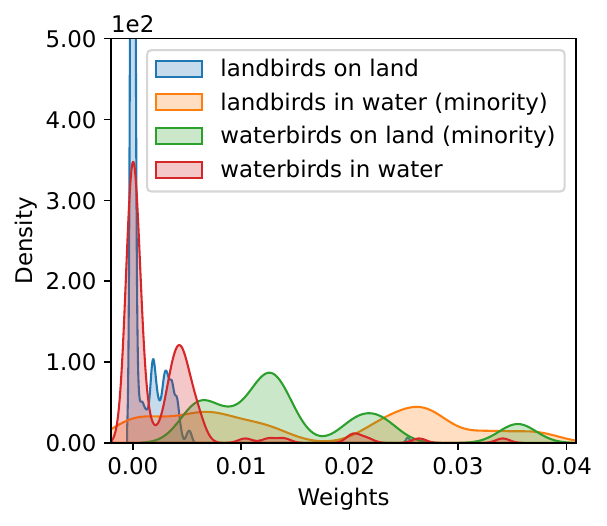}
    \caption{Waterbirds}
    \label{fig:wb-weights}
    \end{subfigure}
    \begin{subfigure}[b]{0.24\columnwidth}
    \centering	
    \includegraphics[width=\columnwidth]{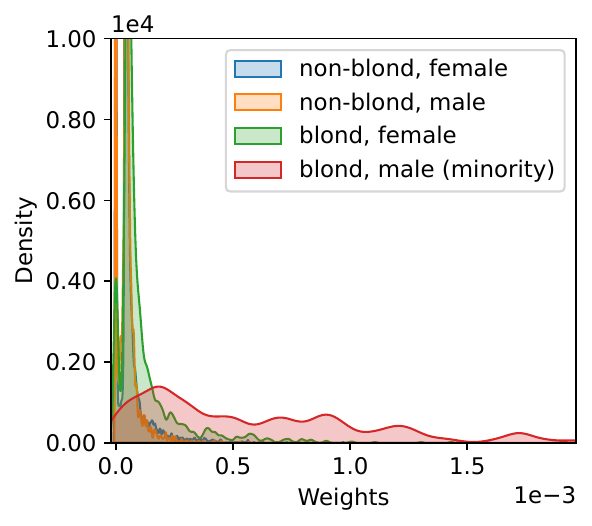}
    \caption{CelebA}
    \label{fig:celeba-weights}
    \end{subfigure}
    \begin{subfigure}[b]{0.24\columnwidth}
    \centering	
    \includegraphics[width=\columnwidth]{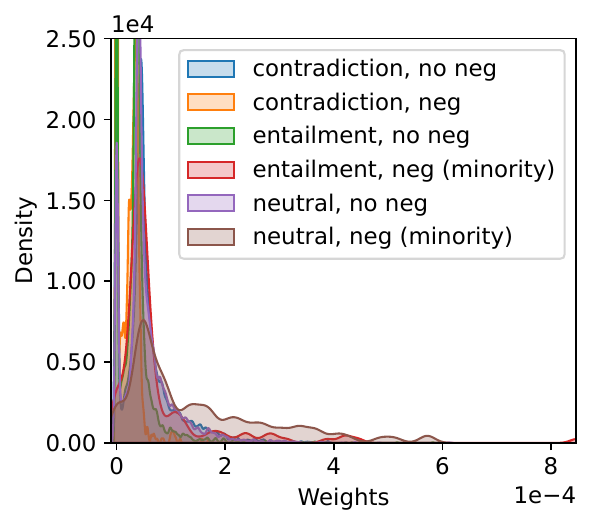}
    \caption{MNLI}
    \label{fig:mnli-weights}
    \end{subfigure}
    \begin{subfigure}[b]{0.24\columnwidth}
    \centering	
    \includegraphics[width=\columnwidth]{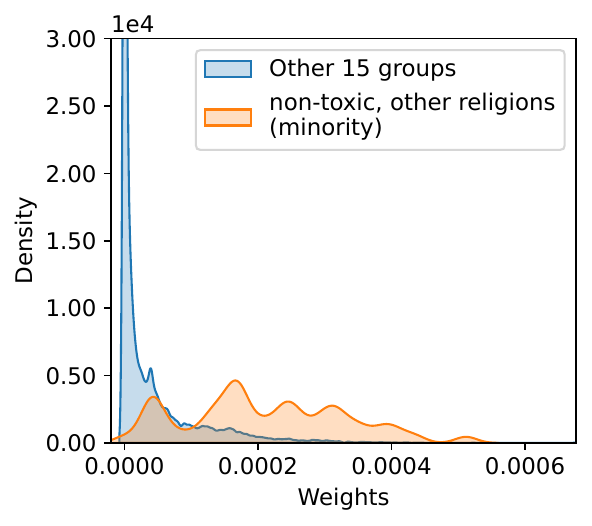}
    \caption{CivilComments}
    \label{fig:civil-weights}
    \end{subfigure}
    \caption{The distribution of sample weights for each group that are used to train the best models. The minority groups have the weight distribution stretched out towards high values, while the majority-group weights are generally skewed towards 0.}
    \label{fig:exp-weights-dist}
\end{figure}

\textbf{Visualizing high-weight samples.}
In addition, we visualize the held-out samples with the highest weight from Waterbirds in Figure~\ref{fig:images}. These high-weight samples also tend to be consistent across different runs. All of these images are from the minority groups (waterbirds on land, landbirds in water). 
As highlighted by the red box, the image 
has an ambiguous background that blends features of both land and water.
Hence, it should be considered as a minority.
Since the group labels for the held-out sets are not used for retraining, \Ours{} is not affected by the annotation errors for group labels and correctly assigns high weights to these samples, since they are supposedly helpful in improving the worst-group performance on the target set. 
However, other group-balanced baselines that directly train on these group labels do not offer such a benefit.
We will subsequently show that \Ours{} is robust to class-label errors in the training data in Section~\ref{sec:noise-robust}.

\begin{figure}[h]
\centering
    \begin{subfigure}[b]{0.48\columnwidth}
    \centering	
    \includegraphics[width=\columnwidth]{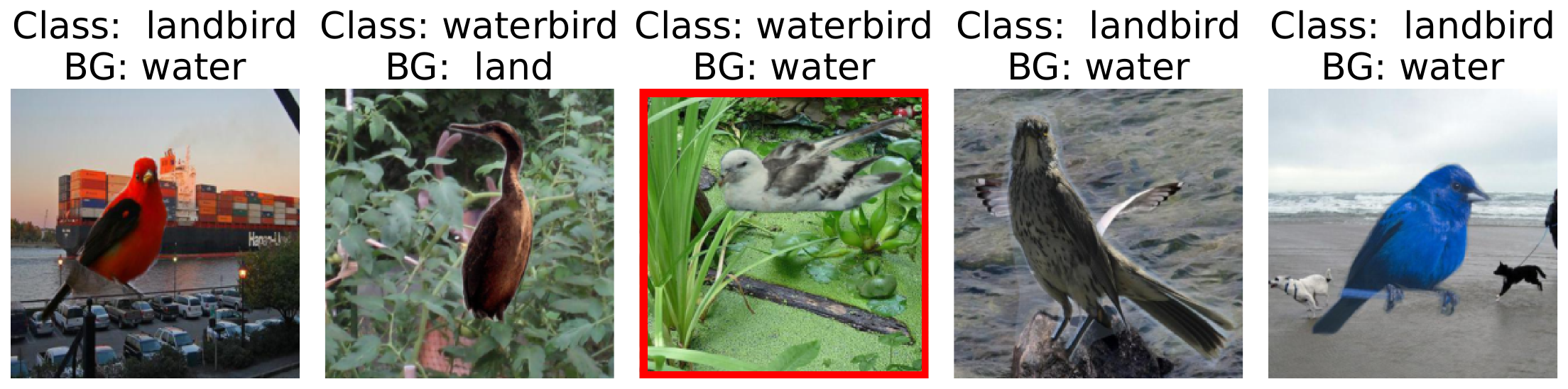}
    \caption{The \emph{most} weighted instances.}
    \label{fig:wb-high}
    \end{subfigure}
    \hfill
    \begin{subfigure}[b]{0.48\columnwidth}
    \centering	
    \includegraphics[width=\columnwidth]{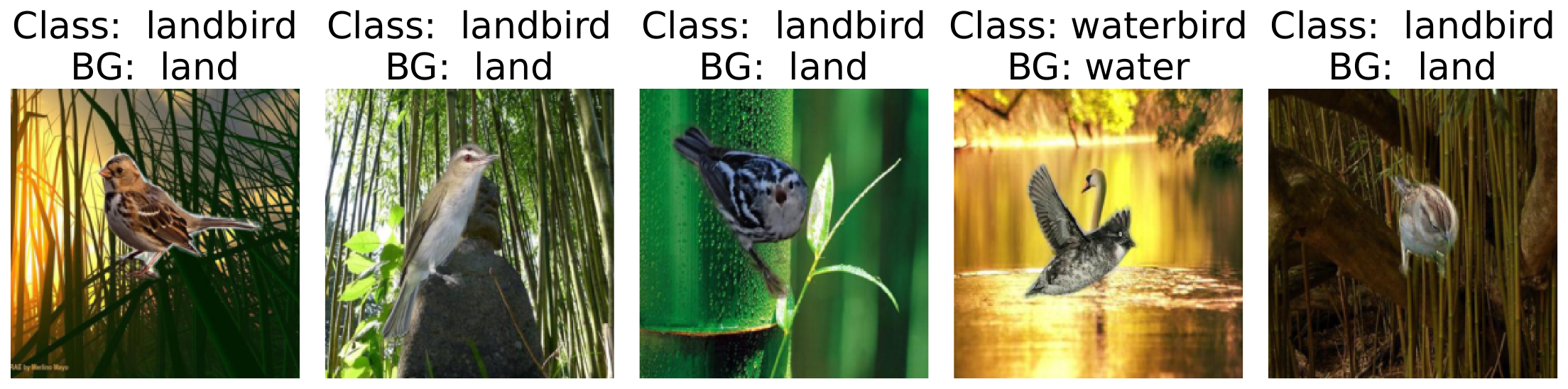}
    \caption{The \emph{least} weighted instances.}
    \label{fig:wb-low}
    \end{subfigure}
    \caption{We illustrate the selected images with their class label and background label from the held-out split in Waterbirds according to the sample weights. In~\ref{fig:wb-high}, the top-5 \emph{most} weighted instances are all from the minority groups with differing bird types and backgrounds. 
    As highlighted in the {\color{red} red} box in (a), the background of the waterbird is 
    ambiguous as it blends features of both land and water
    and hence should be categorized as a minority.
    \Ours{} correctly identifies it despite the suboptimal annotation.
    In~\ref{fig:wb-low}, the top-5 \emph{least} weighted instances are all from the majority groups where the background is spuriously correlated with the bird type. 
    }
    \label{fig:images}
\end{figure}

\begin{figure}
\label{fig:exp-study}
\centering
    \begin{subfigure}[b]{0.24\columnwidth}
    \centering	
    \includegraphics[width=\columnwidth]{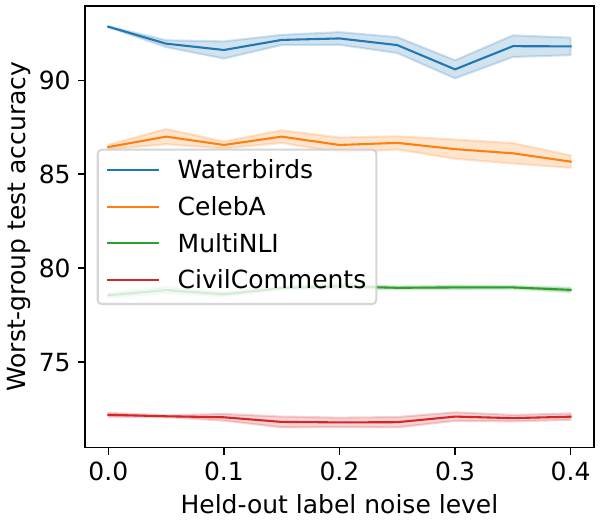}
    \caption{Label noise robustness}
    \label{fig:noise-robust}
    \end{subfigure}
    \begin{subfigure}[b]{0.24\columnwidth}
    \centering	
    \includegraphics[width=\columnwidth]{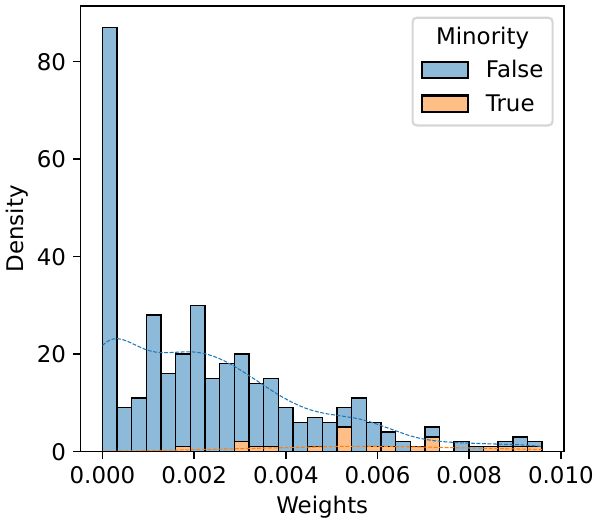}
    \caption{Clean data weights}
    \label{fig:clean-weights}
    \end{subfigure}
    \begin{subfigure}[b]{0.24\columnwidth}
    \centering	
    \includegraphics[width=\columnwidth]{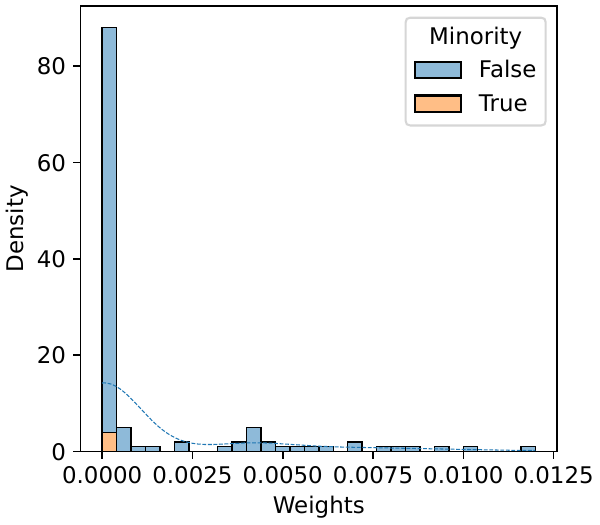}
    \caption{Noisy data weights}
    \label{fig:noise-weights}
    \end{subfigure}
    \begin{subfigure}[b]{0.24\columnwidth}
    \centering	
    \includegraphics[width=\columnwidth]{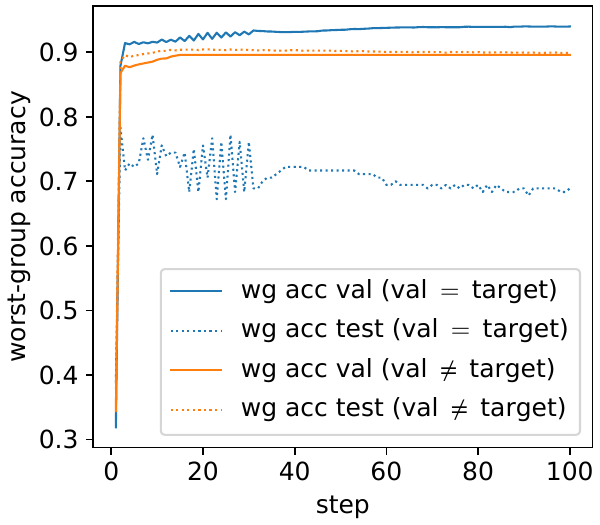}
    \caption{Val vs. test acc}
    \label{fig:val-vs-test}
    \end{subfigure}
    \caption{In-depth study of our algorithm. 
    In~\ref{fig:noise-robust}, the worst-group test accuracy degrades slightly even when up to 40\% of the held-out set labels are corrupted. In~\ref{fig:clean-weights}, most of the uncorrupted minority samples received higher weight assignments than non-minority examples.
     In contrast, in~\ref{fig:noise-weights}, the corrupted minority instances are correctly assigned with close-to-0 weights. \ref{fig:val-vs-test} shows the relationship between validation and test worst-group accuracy on the CelebA dataset. It is important to have separate target and validation sets. Otherwise, overfitting can easily occur.}
\end{figure}

\subsection{Robustness to Label Noise} \label{sec:noise-robust}
The real-world data often have annotation errors~\citep{beyer2020we, gururangan2018annotation}. If not addressed properly, the errors can substantially degrade the model's group robustness when the spurious correlation is strong in the training set~\citep{qiao2024understanding}. 
Assuming efforts are invested into the quality of the group-labeled dataset $\DL$, the other training data without group labels in $\DU$ may have unaddressed errors such as mislabeled classes. 
To evaluate the resilience of \Ours{} on training data with different quality, we simulate the variation of data quality by adding class label noise. In particular, we randomly flip up to 40\% of the class labels in the held-out set to incorrect classes, and leave the target and validation sets unchanged. 
As shown in Figure~\ref{fig:noise-robust}, the worst-group performance of \Ours{} has minimal performance degradation. This is because using high-quality targets for training sample reweighting allows more fine-grained control over the training data. 
To further understand why, we plot the learned weight distribution of the held-out data in Waterbirds.
Compared to that of the clean data in Figure~\ref{fig:clean-weights}, the weight distribution of the noisy data is more skewed towards 0 as shown in Figure~\ref{fig:noise-weights}. In particular, almost all minority instances with noisy labels have close-to-0 weights, equivalent to being removed from the training set. 
Therefore, the efficacy of \Ours{} can be attributed to the property of automatically cleaning the training data via sample reweighting, achieving a similar effect as other noisy label detection and removal approaches~\citep{ghorbani2019data, wang2024helpful}.
\Ours{} being robust to class label corruption has an important implication: to improve the robustness to distribution shifts under a limited annotation budget, more efforts should be dedicated to ensuring the quality of the target set. 

\section{Related Work}
\textbf{Subpopulation shifts.} Group labels are essential for training and evaluating machine learning models under subpopulation shifts. 
Various approaches have been proposed to handle scenarios with different levels of availability of group labels:
fully known for all data splits~\citep{deng2024robust, idrissi2022simple, Sagawa2019DistributionallyRN}, 
partially known~\citep{kirichenko2022last, Liu2021JustTT, liu2022avoiding, nam2022spread, qiu2023simple, zhang2022correct, zhou2022model}, or completely unknown~\citep{han2024improving, labonte2024towards}. 
In addition, \citet{yang2023change} comprehensively studied the effectiveness of these approaches on a variety of benchmarks with unknown group labels and demonstrated that optimizing for the worst class can be surprisingly effective for improving the worst-group performance. Our approach falls into the category of learning with partial group labels (from the validation set) and differs from most of the approaches by using group-labeled data for sample reweighting instead of direct training. 

\textbf{Sample reweighting and implicit differentiation.}
Importance weighting is a classic paradigm for learning under distribution shifts~\citep{chen2025duet,fang2020rethinking, fang2023generalizing}. 
The idea can be further expanded to bilevel optimization of the sample weights and the model parameters~\citep{ren2018learning, zhou2022model}.
These works on bilevel optimization focus on using truncated backpropagation to circumvent the challenge of differentiating through the unrolled training trajectory. Implicit differentiation is an alternative solution to the problem and has been applied to areas such as hyperparameter optimization~\citep{foo2007efficient, lorraine2020optimizing, pedregosa2016hyperparameter}, meta-learning~\citep{lee2019meta, rajeswaran2019meta}, generative modeling~\citep{domke2012generic, samuel2009learning}, and fairness~\citep{shui2022fair}. 
In addition, \citet{li2022achieving, wang2024fairif} have explored one-step (i.e., not iterative) sample reweighting via the influence function for algorithmic fairness. 
\citet{bae2022if} shows an alternative derivation of the influence function via implicit differentiation on the training set. 
In contrast to these works, we show the connection between the influence function and bilevel optimization via implicit differentiation and devise an efficient scheme to rigorously apply it iteratively for addressing subpopulation shifts.

\section{Conclusion and Future Work}
Motivated from the bilevel minimax objective, \Ours{} utilizes the group-labeled data for fine-grained reweighting of the training samples via a soft aggregation of the influence functions. 
It seamlessly integrates with last-layer retraining, resulting in a lightweight and effective strategy for improving group robustness. 
\Ours{} also has the ability to automatically clean and guide the training of potentially noisy datasets via sample reweighting. 
Our empirical analysis further supports the alternative training paradigm that emphasizes the use of high-quality, group-labeled instances as the target set.

We identify several future research directions to further improve our method.
Despite \Ours{} achieving competitive worst-group performance, it results in a decrease in mean performance. 
This trade-off is common in efforts to improve group robustness~\citep{chen2022pareto}.
Additionally, our work is based on the hypothesis that standard ERM learns sufficiently high-quality representations.
Although \Ours{} test the limit of last-layer retraining, it does not directly improve the representation learning. 
To mitigate the trade-off between the worst-group performance and the mean performance, enhancing the quality of representations is still a critical step~\citep{izmailov2022feature}. To answer the question of whether sample reweighting can improve the representation learning for the group robustness, it is necessary to retrain more components of the model after obtaining the sample weights.
Developing more efficient strategies that jointly optimize the sample weights and deep model parameters may offer more insights and advancements in tackling subpopulation shifts and beyond. 

\section*{Acknowledgement}
This research/project is supported by the National Research Foundation, Singapore under its AI Singapore Programme (AISG Award No: AISG$2$-PhD/$2021$-$08$-$017$[T]). 
This research is supported by the National Research Foundation Singapore and the Singapore Ministry of Digital Development and Innovation, National AI Group under the AI Visiting Professorship Programme (award number AIVP-$2024$-$001$).
This research is supported by the National Research Foundation (NRF), Prime Minister's Office, Singapore under its Campus for Research Excellence and Technological Enterprise (CREATE) programme. The Mens, Manus, and Machina (M$3$S) is an interdisciplinary research group (IRG) of the Singapore MIT Alliance for Research and Technology (SMART) centre. 
Jingtan Wang is supported by the Institute for Infocomm Research of Agency for Science, Technology and Research (A*STAR).
We would like to thank the anonymous reviewers and AC for the constructive and helpful feedback.

\bibliography{iclr2025_conference}
\bibliographystyle{iclr2025_conference}

\newpage
\appendix
\section{Derivations and Proofs}
\subsection{Derivation of the Jacobian of the Gradient w.r.t. the Sample Weights} \label{app:jacobian}
The derivative of $\nabla_{\theta} \Rhat(\Dtr; w, \theta)$ w.r.t. individual sample weights $w_i$ is:
\begin{align} \label{eq:jacobian}
    \nabla_{w_i} \nabla_{\theta} \Rhat(\Dtr; w, \theta)
    &= \nabla_{w_i} \nabla_{\theta} \sum_{j \in 1, \dots, |\Dtr|} w_j \ell(\xtr_j, \ytr_j; \theta) \nonumber \\
    &= \nabla_{w_i} \sum_{j \in 1, \dots, |\Dtr|} w_j \nabla_{\theta} \ell(\xtr_j, \ytr_j; \theta) \nonumber \\
    &= \nabla_{\theta} \ell(\xtr_i, \ytr_i; \theta) \nonumber \\
    &= \nabla_{\theta} \Rhat(z_i; \theta_T)\ .
\end{align}

\subsection{\texorpdfstring{Derivation of Last-step Gradient (MAPLE~\citep{zhou2022model})}{Derivation of Last-step Gradient (MAPLE)}} \label{app:last-step-grad}
{According to MAPLE’s one-step truncated backpropagation, the gradient descent updates before $\theta_{T-1}$ are truncated, so that $\theta_{T-1}$ is a ``constant'' and is independent of $w$. Thus, it is true that $\nabla_{w_i} \theta_{T-1} = 0$, but not for $\nabla_{w_i} \theta_{T}$. 
Recall that for simplicity, we use the notation
    $\nabla_{\theta}\Rhat(\Dtr; w, \theta_{T})$ for $\left.\nabla_{\theta}\Rhat(\Dtr; w, \theta)\right\rvert_{\theta =\theta_{T}}$.
We show that \Eqref{eq:maple} can be derived as follows:}
\begin{align*}
    \nabla_{w_i} \thetahat^* &\approx \nabla_{w_i} \theta_T \\
    &\approx \nabla_{w_i} \left(\theta_{T-1} - \eta \nabla_{\theta}\Rhat(\Dtr; w, \theta_{T-1})\right) \tag{{One-step truncation}} \\
    &\approx \nabla_{w_i} \left({\theta_{T-1}} - \eta \nabla_{\theta}\Rhat(\Dtr; w, \theta_{T})\right) \tag{$\theta_T \approx \theta_{T-1}$ on convergence, {so $\nabla_{\theta}\Rhat(\Dtr; w, \theta_{T}) \approx \nabla_{\theta}\Rhat(\Dtr; w, \theta_{T-1})$}} \\
    &= 0 - \eta \nabla_{w_i} \nabla_{\theta} \Rhat(\Dtr; w, \theta_T) \tag{{$\nabla_{w_i} \theta_{T-1} = 0$ because of one-step truncation}} \\
    &= - \eta \nabla_{\theta} \Rhat(z_i; \theta_T) \tag{Apply \Eqref{eq:jacobian}}\ .
\end{align*}

\subsection{Derivation of Meta-gradient Update} \label{app:meta-grad}
We utilize the technique of implicit differentiation. Upon convergence to an optimal solution, the inner problem has the property of $\nabla_\theta \Rhat(\Dtr;w,\thetahat^*)=0$. By {first differentiating w.r.t. $w$ on both sides of the equation, then followed} by the chain rule, we have the following derivations:
\begin{align*}
    \frac{\dd 0}{\dd w} &= \frac{\dd \nabla_\theta \Rhat(\Dtr;w,\hat{\theta}^*)}{\dd w} \\
    0 &= \frac{\dd \nabla_\theta \Rhat(\Dtr;w,\hat{\theta}^*)}{\dd w} \\
    0 &= \frac{\partial \nabla_\theta \Rhat(\Dtr;w,\hat{\theta}^*)}{\partial w} + \frac{\partial \nabla_\theta \Rhat(\Dtr;w,\hat{\theta}^*)}{\partial \hat{\theta}^*}\frac{\dd \hat{\theta}^*}{\dd w} \tag{Chain-rule}  \\
    0 &= \nabla_w \nabla_\theta \Rhat(\Dtr;w,\hat{\theta}^*) + \nabla_\theta^2 \Rhat(\Dtr;w,\hat{\theta}^*) \frac{\dd \hat{\theta}^*}{\dd w} \\
    \frac{\dd \hat{\theta}^*}{\dd w} &= - \left( \nabla_\theta^2 \Rhat(\Dtr;w,\hat{\theta}^*) \right)^{-1} \nabla_w \nabla_\theta \Rhat(\Dtr;w,\hat{\theta}^*) \\
    \frac{\dd \hat{\theta}^*}{\dd w} &= - H_{\hat{\theta}^*, w}^{-1} \nabla_w \nabla_\theta \Rhat(\Dtr;w,\hat{\theta}^*)\ .
\end{align*}

For each sample weight $w_i$:
\begin{align*}
    \frac{\dd \hat{\theta}^*}{\dd w_i} &= - H_{\hat{\theta}^*, w}^{-1} \nabla_{w_i} \nabla_\theta \Rhat(\Dtr;w,\hat{\theta}^*) \\
    &= - H_{\hat{\theta}^*, w}^{-1} \Rhat(z_i; \theta_T) \tag{Apply \Eqref{eq:jacobian}}\ .
\end{align*}

\subsection{Proof of the Strong Convexity of Linear Classification with Cross-entropy Loss and $\ell_2$-Regularization}
\label{app:proof-convexity}

In this section, we add the complete proof for the established result of the strong convexity (a subset of strict convexity) of linear classification with cross-entropy loss and $\ell_2$-regularization. Before introducing the complete proof, a shortcut intuitive proof for this consists of the following steps:
\begin{enumerate}[leftmargin=*, itemsep=0pt]
    \item Linear classification with cross-entropy loss is convex, and its Hessian $H$ is positive semi-definite, implying $H \succeq 0$. 
    \item Adding $\ell_2$-regularization with a positive regularization strength $\lambda>0$ produces a new Hessian $H' = H+\lambda I\succeq \lambda I$, which meets the definition of strong convexity (Definition~\ref{def:strong-convex}).
\end{enumerate}

We now present a complete proof of the result of the strong convexity by directly showing the $H'$ in Theorem~\ref{prop:main}. 

\begin{definition}[$\mu$-Strong convexity~\citep{boyd2004convex}] \label{def:strong-convex}
    For $\mu>0$, a differentiable function $f$ is $\mu$-strongly convex if for some $\mu > 0$, $\nabla^2f(x) \succeq \mu I$.
\end{definition}

\begin{definition}[One-hot Vector]
For a dimension $d$, $u(i) \in \{0, 1\}^d, \sum u(i)=1$ is the one-hot vector with $i$-th dimension $u(i)_i=1$. We omit $d$ in the following context for simplicity. 
\end{definition}

\begin{definition}[Softmax]
    Given the input $x \in \mathbb{R}^d$, define the softmax function $\sigma: \mathbb{R}^d \rightarrow \mathbb{R}^d $ as:
    \[
    \sigma(x)_i = \frac{e^{x_i}}{\sum_i e^{x_i}}\ .
    \]
\end{definition}
Its derivative $\nabla_x \sigma(x) \in \mathbb{R}^{d\times d}$ has the form:
    \begin{equation} \label{eq:softmax-derivative}
        \nabla_x \sigma(x) = \mathrm{diag}(\sigma(x)) - \sigma(x)\sigma(x)^\top \ .
    \end{equation}
    In addition, 
    \begin{equation} \label{eq:softmax-ith-derivative}
        \nabla_x \sigma(x)_i = \sigma(x)_i(u(i) -\sigma(x))\ ,
    \end{equation}
    where $u(i) \in \{0, 1\}^d, \sum u(i)=1$ is the one-hot vector with $i$-th dimension $u(i)_i=1$. 
\begin{proposition} \label{prop:softmax-psd}
    The first-order derivative matrix $\nabla_x \sigma(x)$ of the softmax function is positive semidefinite. 
\end{proposition}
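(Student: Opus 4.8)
The plan is to show that $\nabla_x \sigma(x)$ is positive semidefinite by directly verifying that $v^\top \nabla_x \sigma(x)\, v \ge 0$ for every $v \in \mathbb{R}^d$. Using the closed form from \Eqref{eq:softmax-derivative}, namely $\nabla_x \sigma(x) = \mathrm{diag}(\sigma(x)) - \sigma(x)\sigma(x)^\top$, I would write out the quadratic form as
\[
    v^\top \nabla_x \sigma(x)\, v = \sum_{i} \sigma(x)_i v_i^2 - \left( \sum_i \sigma(x)_i v_i \right)^2 .
\]

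\textbf{Main argument.} The key observation is that this expression is exactly a variance. Let $p$ be the probability distribution on $\{1,\dots,d\}$ with $p_i = \sigma(x)_i$ (these are nonnegative and sum to $1$ by definition of softmax), and let $Z$ be the random variable taking value $v_i$ with probability $p_i$. Then the first term is $\mathbb{E}[Z^2]$ and the second term is $(\mathbb{E}[Z])^2$, so the whole quadratic form equals $\mathrm{Var}(Z) = \mathbb{E}[Z^2] - (\mathbb{E}[Z])^2 \ge 0$. Since $v$ was arbitrary, $\nabla_x \sigma(x) \succeq 0$. Alternatively, one can avoid the probabilistic language and invoke the Cauchy--Schwarz inequality directly: writing $\sum_i \sigma(x)_i v_i = \sum_i \sqrt{\sigma(x)_i}\cdot \sqrt{\sigma(x)_i}\, v_i$, Cauchy--Schwarz gives $\left(\sum_i \sigma(x)_i v_i\right)^2 \le \left(\sum_i \sigma(x)_i\right)\left(\sum_i \sigma(x)_i v_i^2\right) = \sum_i \sigma(x)_i v_i^2$, using $\sum_i \sigma(x)_i = 1$, which is precisely the claimed inequality.

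\textbf{Expected obstacle.} Honestly there is no serious obstacle here; the result is elementary once the quadratic form is expanded. The only mild care needed is (i) to note explicitly that $\sigma(x)_i \ge 0$ and $\sum_i \sigma(x)_i = 1$ so that the ``variance'' / Cauchy--Schwarz interpretation is legitimate, and (ii) to be careful that \Eqref{eq:softmax-derivative} is being used correctly when transcribing $\mathrm{diag}(\sigma(x))$ and the rank-one term $\sigma(x)\sigma(x)^\top$ into the scalar expansion. After establishing this proposition, it feeds directly into the chain-rule computation of the Hessian of the cross-entropy loss composed with a linear map, which is a positive semidefinite Gram-type matrix; adding $\lambda I$ then yields the strong convexity claimed in Theorem~\ref{prop:main}.
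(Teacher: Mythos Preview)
Your proposal is correct and follows essentially the same route as the paper: expand the quadratic form using \Eqref{eq:softmax-derivative}, note that $\sigma(x)$ is a probability vector, and conclude nonnegativity via Cauchy--Schwarz (your variance interpretation is just a repackaging of the same inequality). No gaps.
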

\begin{proof}
First $\sigma(x)$ defines a probability distribution. Hence, it has the property that $\sigma(x)_i\geq 0 ~\forall i,~ \sum_i \sigma(x)_i=1$. Then according to the definition of positive semidefiniteness:
    $\forall z \in \mathbb{R}^d, z \neq \mathbf{0}$:
    \begin{align*}
        z^\top \nabla_x \sigma(x) z &= z^\top( \mathrm{diag}(\sigma(x)) - \sigma(x)\sigma(x)^\top)z \\
        &= z^\top \mathrm{diag}(\sigma(x)) z - z^\top \sigma(x)\sigma(x)^\top z \\
        &= \sum_i \sigma(x)_i z_i^2 - (\sum_i \sigma(x)_iz_i)^2 \\
        &\geq 0 \ . \tag{Cauchy-Schwartz}
    \end{align*}
\end{proof}

\begin{definition}[Cross-entropy]
    Given the input $z \in \mathbb{R}^d, y \in \{1, \dots, K\}$ for a $K$-way classification problem, define the cross entropy loss $L$:
    \[
    \CE(z, y) = -\sum_k^K \mathbb{I}[{y=k}] \log z_i \ ,
    \]    
    where $\mathbb{I}$ is the indicator function.
\end{definition}

\begin{theorem}[Strong convexity of cross-entropy loss with $\ell_2$-regularization] \label{prop:main}
    For multi-class linear classification with cross-entropy loss, if $\ell_2$-regularization with positive coefficient $\lambda$ is applied, then the objective is $\lambda$-strongly convex. 
\end{theorem}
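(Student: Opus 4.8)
The plan is to compute the Hessian $H'$ of the regularized objective at an arbitrary parameter value and verify directly that $H' \succeq \lambda I$, which is exactly the condition in Definition~\ref{def:strong-convex}. Write the linear classifier as logits $z = Wx \in \mathbb{R}^K$ with weight matrix $W$, let $\psi = \mathrm{vec}(W)$ be the trainable parameter, and write the ($\ell_2$-regularized, possibly weighted) objective as $f(\psi) = \sum_i w_i\, \CE(\sigma(Wx_i), y_i) + \tfrac{\lambda}{2}\lVert\psi\rVert_2^2$ with each $w_i \ge 0$ (for the unweighted statement, $w_i = 1/n$). Since the Hessian of $\tfrac{\lambda}{2}\lVert\psi\rVert_2^2$ is exactly $\lambda I$ and Hessians add, it suffices to show that the Hessian of the per-example loss $\psi \mapsto \CE(\sigma(Wx), y)$ is positive semidefinite; nonnegative weights then preserve this under summation, and adding $\lambda I$ gives $H' \succeq \lambda I$.

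There are two ingredients. First, I would work in logit space: since $\CE(\sigma(z), y) = -z_y + \log\sum_k e^{z_k}$, the linear term $-z_y$ drops out of the Hessian and $\nabla_z^2 \CE(\sigma(z), y) = \mathrm{diag}(\sigma(z)) - \sigma(z)\sigma(z)^\top = \nabla_z \sigma(z)$ by~\eqref{eq:softmax-derivative}, which is positive semidefinite by Proposition~\ref{prop:softmax-psd} (equivalently, it is a covariance matrix under the distribution $\sigma(z)$). Second, because $z = Wx$ is linear in $\psi$, the chain rule for the Hessian of a composition with an affine map gives $\nabla_\psi^2 \CE(\sigma(Wx), y) = \big(\nabla_z^2 \CE(\sigma(z), y)\big) \otimes \big(xx^\top\big)$, a Kronecker product of two positive semidefinite matrices and hence itself positive semidefinite. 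Summing over the data, $H = \sum_i w_i\big(\nabla_z^2 \CE_i \otimes x_i x_i^\top\big) \succeq 0$.

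Putting this together, $H' = \nabla_\psi^2 f(\psi) = H + \lambda I \succeq \lambda I$ at every $\psi$, and since $f$ is clearly twice continuously differentiable, Definition~\ref{def:strong-convex} yields $\lambda$-strong convexity. The only step requiring genuine care is the second ingredient: checking index-by-index that $\partial^2 \CE / \partial W_{jl} \partial W_{j'l'} = \big(\partial^2 \CE / \partial z_j \partial z_{j'}\big)\, x_l x_{l'}$, so that the parameter-space Hessian really is the claimed Kronecker product, and noting that the affine $-z_y$ piece of cross-entropy contributes nothing to it. Everything else is a direct invocation of Proposition~\ref{prop:softmax-psd} and Definition~\ref{def:strong-convex}.
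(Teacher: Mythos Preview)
Your proposal is correct and follows essentially the same approach as the paper: compute the Hessian of the regularized per-example cross-entropy loss directly and verify $H' - \lambda I \succeq 0$ by invoking Proposition~\ref{prop:softmax-psd}. Your Kronecker-product formulation $(\nabla_z^2 \CE) \otimes (xx^\top)$ is a dimensionally cleaner rendering of what the paper writes more loosely as $x(\mathrm{diag}(\sigma) - \sigma\sigma^\top)x^\top$, and you additionally make explicit that the nonnegatively weighted sum over data points preserves positive semidefiniteness, which the paper leaves implicit.
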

\begin{proof}
    Let $x \in \mathbb{R^d}, y\in \{1, \dots, K\}$ denote the input, label of a $K$-way classification task. Let $\psi \in \mathbb{R}^{d \times K}$ denote the linear model parameters. Let $u(y)$ denote the one-hot representation of $y$. The objective function using cross-entropy loss and $\ell_2$-regularization is:
\begin{align*}
    L(x, y; \psi) &= \CE(\sigma(\psi^\top x), y) + \frac{\lambda}{2} \ltnormsq{\psi}_2 \\
    &= -\sum_k^K \mathbb{I}[{y=k}] \log (\sigma(\psi^\top x)_k) + \frac{\lambda}{2} \ltnormsq{\psi}_2 \\
    &= - \log (\sigma(\psi^\top x)_y) + \frac{\lambda}{2} \ltnormsq{\psi}_2 \ .
\end{align*}
The first-order derivative:
\begin{align*}
    \nabla_\psi L(x, y;\psi) 
    &= - x \frac{1}{\sigma(\psi^\top x)_y} \sigma(\psi^\top x)_y \left(u(y)- \sigma(\psi^\top x)\right)^\top + \lambda \psi \tag{Equation~\ref{eq:softmax-ith-derivative}} \\
    &= x (\sigma(\psi^\top x)-u(y))^\top  + \lambda \psi \ .
\end{align*}

The second-order derivative (Hessian):
\begin{align*}
    H &= \nabla^2_\psi L(x, y;\psi) \\
    &= \frac{\partial x (\sigma(\psi^\top x)-u(y))^\top}{\partial \psi} + \lambda I_{d\times k} \\
    &= x \frac{\partial \sigma(\psi^\top x)^\top}{\partial \psi} + \lambda I_{d\times k} \\
    &= x \frac{\partial \sigma(\psi^\top x)^\top}{\partial (\psi^\top x)} \frac{\partial \psi^\top x}{\partial \psi} + \lambda I_{d\times k} \\
    &=  x [x( \mathrm{diag}(\sigma(\psi^\top x)) - \sigma(\psi^\top x)\sigma(\psi^\top x)^\top)]^\top + \lambda I_{d\times k} \tag{\Eqref{eq:softmax-derivative}} \\
    &= x \left(\mathrm{diag}(\sigma(\psi^\top x)) - \sigma(\psi^\top x)\sigma(\psi^\top x)^\top\right) x^\top + \lambda I_{d\times k} \ .
\end{align*}

Then we show that the Hessian of $L$ meets the requirement of $\lambda$-strong convexity, i.e., $H - \lambda I \succeq 0$, $\forall z \in \mathbb{R}^{d\times k}, z \neq 0$:
\begin{align*}
    z^\top (H-\lambda I_{d\times k}) z &= z^\top x \left(\mathrm{diag}(\sigma(\psi^\top x)) - \sigma(\psi^\top x)\sigma(\psi^\top x)^\top\right) x^\top z + z^\top (\lambda I_{d\times k} - \lambda I_{d\times k}) z \\
    &= z^\top x \left(\mathrm{diag}(\sigma(\psi^\top x)) - \sigma(\psi^\top x)\sigma(\psi^\top x)^\top\right) x^\top z \\
    &\geq 0 \ . \tag{Proposition~\ref{prop:softmax-psd}} \\
\end{align*}
Hence, $L$ is strongly convex according to Definition~\ref{def:strong-convex}.
\end{proof}

\color{black}

\section{Experimental Details} \label{app:exp}
\subsection{Setup}
For model architectures and initialization, we use ImageNet-pretrained ResNet-50 (V1)~\citep{he2016deep} for Waterbirds and CelebA datasets, and use BERT (HuggingFace) for MultiNLI and CivilComments.
The MNLI dataset is preprocessed according to the setup in \url{https://github.com/kohpangwei/group_DRO BertTokenizer}. The CivilComment dataset is tokenized with BertTokenizer with max\_seq\_length=300. For all experiments, we perform 5 independent runs on seed [1, 2, 3, 4, 5] to obtain the mean and the standard deviations. 

\subsection{Hyperparameters} \label{app:hyper}
For all experiments, we fix the held-out fraction $\alpha$ as $10\%$ and fix the resulting partition splits (i.e., the held-out set and the remaining set) of the training set $\Dtr$, which are generated by setting the numpy random seed to 1. Although additionally tuning the held-out fraction $\alpha$ and changing the split can most likely improve the worst-group performance by balancing the data for representation learning and sample reweighting, we do not alter these splits for better consistency and only focus on the second sample reweighting stage of the training.

For stage 1, we use \emph{almost} the same hyperparameters as used by DFR (some are slightly altered for better consistency and simplicity by heuristics \emph{without} any tuning) and record them in Table~\ref{tab:hyper-base}. 

For stage 2, there are two sets of hyperparameters for the inner and the outer loops. We perform a non-exhaustive search by randomly sampling the combination of hyperparameters from the grid. We select the best-performing model checkpoints and hyperparameter configurations based on the worst-group validation accuracy. The selected hyperparameter configurations are documented in Tables~\ref{tab:hyper-outer} and \ref{tab:hyper-inner} for the outer loop and the inner loop respectively. Besides, we consistently apply weight normalization (to ensure the sum of weights equals one) and a step learning rate scheduler (StepLR), which decays the outer learning rate by a factor of 10 every 30 steps. This facilitates a better convergence for the bilevel optimization.

\begin{table}[h]
    \caption{\textbf{Hyperparameters for base model training}. For Waterbirds and CelebA, we use \emph{almost} the same hyperparameters as used by DFR. We slightly change the hyperparameters on MNLI and CivilComments (without any tuning) so that no learning rate scheduler is used. Momentum is set as 0.9 for SGD.}
    \centering
    \begin{tabular}{c|ccccccc}
        \hline
         Dataset & optimizer & lr & scheduler & batch size & weight decay & epochs  \\\hline
         Waterbirds & SGD & 1e-3 & Constant & 32 & 1e-3 & 100\\
         CelebA & SGD & 1e-3 & Constant & 128 & 1e-4 & 50\\
         MultiNLI & AdamW & 2e-5 & Constant & 32 & 1e-4 & 4\\
         CivilComments & AdamW & 2e-5 & Constant & 32 & 1e-4 & 4\\\hline
    \end{tabular}
    \label{tab:hyper-base}
\end{table}

\begin{table}[h]
    \caption{Hyperparameters for sample weight updates (outer loop). For the step learning rate scheduler (StepLR), we simply decay the learning rate by a factor of 10 every 30 steps.}
    \centering
    \begin{tabular}{c|cccccc}
         \hline
         Dataset & optimizer & lr & scheduler & grad clip & $\tau$ & steps  \\\hline
         Waterbirds & GD & 1 & StepLR & 1 & 0.1 & 100\\
         CelebA & GD & 1 & StepLR & 1e-2 & 0.01 & 100\\
         MultiNLI & GD & 0.1 & StepLR & 1e-2 & 1 & 50\\
         CivilComments & GD & 1 & StepLR & 1e-1 & 0.1 & 50\\\hline
    \end{tabular}
    \label{tab:hyper-outer}
\end{table}

\begin{table}[h]
    \caption{Hyperparameters for last-layer retraining (inner loop)}
    \centering
    \begin{tabular}{c|ccccccc}
         \hline
         Dataset & optimizer & lr & scheduler & batch size & weight decay & line search  \\\hline
         Waterbirds & L-BFGS & 1e-4 & Constant & full & 1e-1 & Strong Wolfe\\
         CelebA & L-BFGS & 1e-4 & Constant & full & 1e-2 & Strong Wolfe\\
         MultiNLI & L-BFGS & 1e-4 & Constant & full & 1e-2 & Strong Wolfe\\
         CivilComments & L-BFGS & 1e-1 & Constant & full & 1e-3 & Strong Wolfe\\\hline
    \end{tabular}
    \label{tab:hyper-inner}
\end{table}

\subsection{Implementation Details}
Let $d$ denote the dimension of $\theta$. 
Let the size of the held-out training set be $n$ here.
To efficiently compute the influence scores of the samples for last-layer retraining, we adopt the following strategy:
\begin{enumerate}
	\item Calculate the Hessian inverse $\hessian^{-1}$ of the weighted objective w.r.t. $\theta = \thetahat^*$. The Hessian has a dimension of $d \times d$.
	\item Calculate the \emph{per-sample} gradient of the \emph{unweighted} objective w.r.t. $\theta = \thetahat^*$ for the \emph{training} set. This results in a $n \times d$ matrix. 
	\item Calculate the \emph{per-group} gradient of the \emph{unweighted} objective w.r.t. $\theta = \thetahat^*$ for the \emph{target} set. This results in a $m \times d$ matrix, where $m$ is the number of groups. 
\end{enumerate}
By combining these three matrices, we obtain the group-wise influence scores as a $n \times m$ matrix, which can then be aggregated as in Algorithm~\ref{alg:ours} and become an $n \times 1$ gradient vector. 

\subsection{Running Time Comparison}
\textbf{Hardware.}
For all experiments, we run on machines with NVIDIA RTX 3080 (10GB) / RTX A5000 (24GB) GPU and AMD EPYC 7543 CPU.

\textbf{Running time.}
We record the detailed running time for both the inner and the outer loop for different datasets in Table~\ref{tab:runtime}. The results are obtained from running a single job on one NVIDIA RTX 3080 GPU with two AMD EPYC 7543 CPUs. 

\begin{table}[h]
    \caption{Running time for \Ours{}}
    \centering
    \begin{tabular}{c|ccc|ccc}
    \hline
         \multirow{2}{*}{Dataset} & \multicolumn{3}{c|}{Dataset stats} & \multicolumn{3}{c}{Running time (s)}\\
         \cline{2-7}
         & Held-out size & Target size & \# of steps & Outer loop  & Inner loop  & Total   \\\hline
         Waterbirds & 479 & 600 & 100 & 2.71 & 0.16 & 285 \\
         CelebA & 16277 & 9934 & 100 & 4.03 & 0.52 & 451 \\
         MultiNLI & 20617 & 46231 & 50 & 7.24 & 0.53 & 381 \\
         CivilComments & 26903 & 22590 & 50 & 8.21 & 0.7 & 882 \\
         \hline
    \end{tabular}
    \label{tab:runtime}
\end{table}

\textbf{Scalability.} As we used \texttt{Pytorch} for our implementation and the last-layer retraining has very few parameters, the per-sample gradient and the Hessian inverse calculation involves high CPU usage, even though the tensors are on GPU. When the same processes are running in parallel, the performance sometimes can be CPU-bound instead of GPU-bound. Thus, the running time is likely to increase when the CPU-usage is high.

\subsection{Detailed Results} \label{app:detail}
In addition to the worst-group accuracy shown in the main paper, we also report the mean accuracy for these datasets in Table~\ref{tab:detail_exp} for reference. We would like to highlight that the worst-group accuracy gain is usually at the expense of the mean accuracy. This implies that trade-offs generally need to be made for better robustness to subpopulation shifts, since the mean accuracy corresponds to an estimation of the model's performance when there is no distribution shift.

\begin{table}[t]
\small
\caption{
\textbf{Performance comparison in detail.} 
We report the worst-group accuracy on four benchmark datasets. For the group label column, \xmark~indicates no group label $g$ is used; \checkmark~indicates $g$ is required for training or validation; \notcheckmark{} indicates $g$ is partially used; \doublecheck{} indicates a proportion of $g$ from the validation set is re-purposed beyond hyperparameter tuning, such as training sample weights or model parameters. 
The row sections are ranked in descending order of the total amount of group label information required.
}
\begin{center}
\resizebox{\textwidth}{!}{
\begin{tabular}{c c cc c cc c cc c cc}
\hline\noalign{\smallskip}
\multirow{2}{*}{\textbf{Method}} & \textbf{Group Label} & \multicolumn{2}{c}{\textbf{Waterbirds}} && \multicolumn{2}{c}{\textbf{CelebA}}
&& \multicolumn{2}{c}{\textbf{MultiNLI}}
&& \multicolumn{2}{c}{\textbf{CivilComments}}
\\
\cline{3-4}
\cline{6-7}
\cline{9-10}
\cline{12-13}
\\[-3mm]
& Train / Val & 
Worst(\%) & Mean(\%)
&& Worst(\%) & Mean(\%)
&& Worst(\%) & Mean(\%)
&& Worst(\%) & Mean(\%)
\\[1mm]\hline\\[-3mm]

RWG & 
\checkmark / \checkmark 
& $87.6_{\pm1.6}$ & $-$
&& $84.3_{\pm1.8}$ & $-$
&& $69.6_{\pm1.0}$ & $-$
&& $72.0_{\pm1.9}$ & $-$
\\

Group DRO & 
\checkmark / \checkmark 
& $91.4_{\pm 1.1}$ & $93.5_{\pm 0.3}$
&& $88.9_{\pm 2.3}$ & $92.9_{\pm 0.2}$
&& $77.7_{\pm 1.4}$ & $81.4_{\pm 0.1}$
&& $70.0_{\pm 2.0}$ & $89.9_{\pm 0.5}$
\\

\hline\\[-3mm]

JTT & 
\xmark / \checkmark &
86.7 & 93.3 
&& 81.1 & 88.0
&& 72.6 & 78.6
&& 69.3 & 91.1
\\

CnC & 
\xmark / \checkmark 
& $88.5_{\pm0.3}$ & $90.9_{\pm0.1}$
&& $88.8_{\pm0.9}$ & $89.9_{\pm0.5}$
&& $-$ & $-$
&& $68.9_{\pm2.1}$ & $81.7_{\pm0.5}$
\\
\hline\\[-3mm]

SSA & 
\xmark / \doublecheck
& $89.0_{\pm0.6}$ & $92.2_{\pm0.9}$
&& $\mathbf{89.8}_{\pm1.3}$ & $92.8_{\pm0.1}$
&& $76.6_{\pm0.7}$ & $79.9_{\pm0.9}$
&& $69.9_{\pm2.0}$ & $88.2_{\pm2.0}$
\\

MAPLE& 
\xmark / \doublecheck
& $91.7$ & $92.9$
&& $ 88.0$ & $89.0$
&& {$72.7$} & {$77.2$}
&& {$64.1$} & {$89.7$}
\\

DFR & 
\xmark /  \doublecheck
& $\mathbf{92.9}_{\pm0.2}$ & $94.2_{\pm0.4}$
&& $88.3_{\pm1.1}$ & $91.3_{\pm0.3}$
&& $74.7_{\pm0.7}$ & $82.1_{\pm0.2}$
&& ${70.1}_{\pm0.8}$ & $ 87.2_{\pm0.3} $
\\


\Ours[HF]{} &
\xmark / \doublecheck
& $87.5_{\pm 0.1}$ & $97.7_{\pm 0.0}$
&& $86.3_{\pm 0.4}$ & $91.5_{\pm 0.0}$
&& $74.7_{\pm 0.0}$ & $80.6_{\pm 0.0}$
&& $68.9_{\pm 0.2}$ & $89.1_{\pm 0.0}$
\\

 \Ours{} & 
\xmark /  \doublecheck
& $\mathbf{92.9}_{\pm 0.0}$ & $94.9_{\pm 0.1}$ %
&& $87.0_{\pm 0.4}$ & $90.9_{\pm 0.0}$ %
&& $\mathbf{78.5}_{\pm 0.3}$ & $79.8_{\pm 0.0}$
&& $\mathbf{71.7}_{\pm 0.6}$ & $85.9_{\pm 0.4}$

\\
\hline\\[-3mm]

ERM & 
\xmark / \xmark 
& $74.9_{\pm2.4}$ & $98.1_{\pm0.1}$ 
&& $46.9_{\pm2.8}$ & $95.3_{\pm0.0}$ 
&& $65.9_{\pm0.3}$ & $82.8_{\pm0.1}$
&& $55.6_{\pm0.6}$ & $92.1_{\pm0.1}$
\\

ES SELF &
\xmark / \xmark
& $\mathbf{93.0}_{\pm0.3}$ & $94.0_{\pm1.7}$
&& $83.9_{\pm0.9}$ & $91.7_{\pm0.4}$
&& $70.7_{\pm2.5}$ & $81.2_{\pm0.7}$
&& $-$ & $-$

\\\hline
\end{tabular}
}
\end{center}
\label{tab:detail_exp}
\end{table}



\section{Understanding the Role of the Inverse Hessian}
\label{app:hessian}
\citet{koh2017understanding} analyzed that the inverse Hessian measures the robustness of the resultant model to upweighting the training data. As Hessian is a symmetric matrix, according to the Spectral Theorem, it guarantees the existence of an orthonormal basis of the eigenvectors. Hence, the gradient can be written as a linear combination of eigenvectors. 
As the influence function utilizes the inverse Hessian, which has the reciprocal of the eigenvalues of the Hessian, the influence score has a \emph{larger} magnitude when the gradient of the training and test points are similar and along the direction of the eigenvectors with \emph{smaller} magnitudes. 

Having smaller eigenvalues can be interpreted as the first-order gradient changing relatively slowly. From the optimization perspective (Newton's method), this usually means we are allowed to take larger gradient steps when the trajectory has less variation with a smoother curvature. On the other hand, the allowed step size becomes smaller when the curvature changes more quickly, indicating a more complex loss landscape. 

If the Hessian is dropped in between the inner product of the gradient as in \Eqref{eq:maple} (used by MAPLE), then we fail to account for the variations of step sizes in different directions specified by the eigenvectors. Thus, the outer loop gradient becomes incorrect and the optimization will be less reliable. 
\section{Visualization of the Spurious Correlations}
We show an illustration of the spurious correlations for the Waterbirds dataset in Figure~\ref{fig:dataset}.
\begin{figure}
    \centering
    \vspace{-6mm}
    \includegraphics[width=0.8\linewidth]{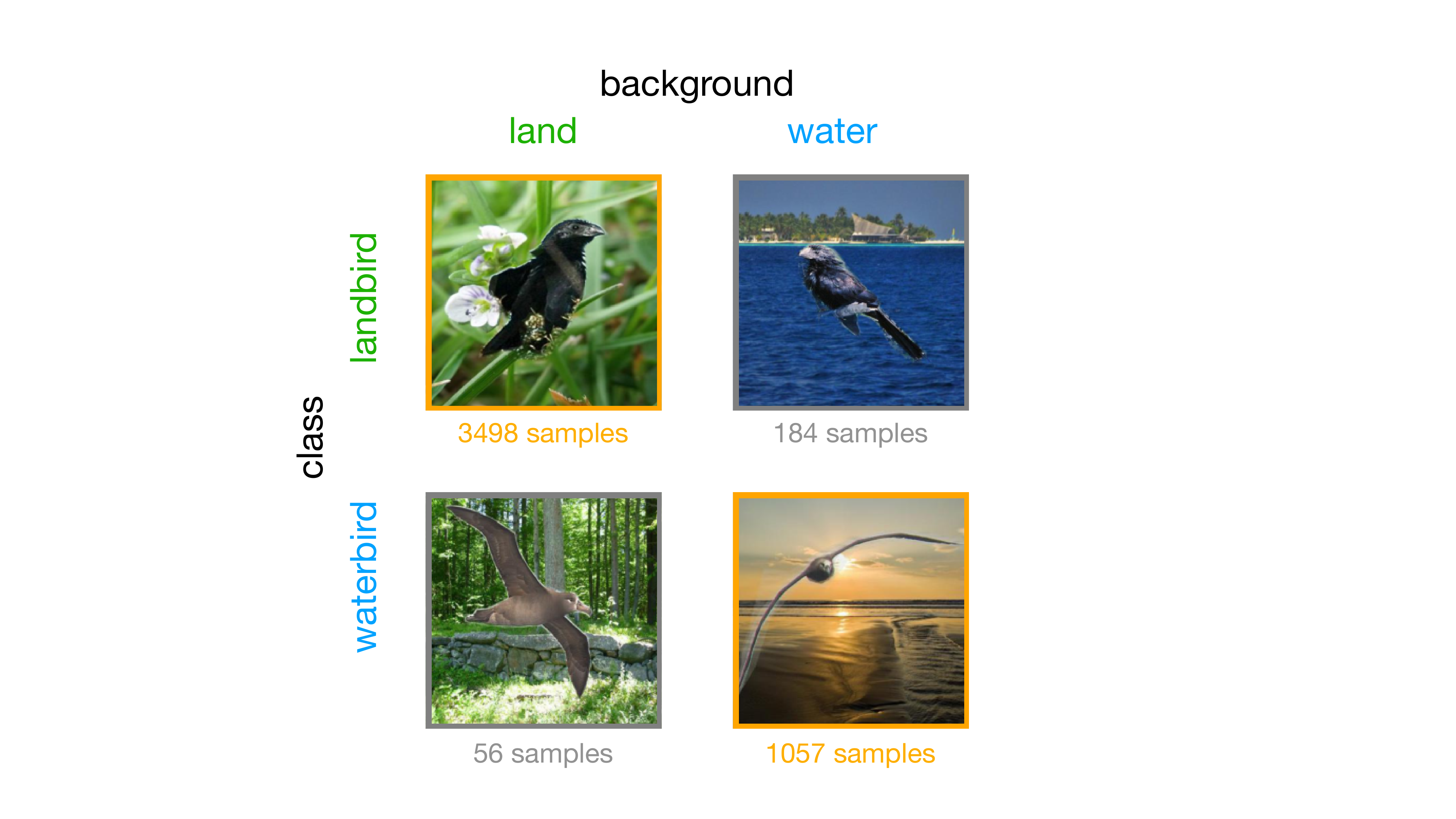}
    \caption{An illustration of the Waterbirds dataset. A spurious correlation exists between the bird class and the background. The majority groups are highlighted in \textcolor{orange}{orange}. The minority groups are in \textcolor{gray}{gray}.}
    \label{fig:dataset}
\end{figure}

\section{What if We Can Retrain the Full Network with \Ours{}?}
In this section, we use a simple setup to explore the potential of retraining the full network with \Ours{}, when the compute budget allows.  

\textbf{Setup.} We conduct an ablation study on a small binary classification task customized from the ColoredMNIST, where the digits are spuriously correlated with color. We use a tiny Convolutional Neural Network (CNN) with 3 layers and about 5000 parameters.

\textbf{Dataset.} \textit{ColoredMNIST-S}. We modify the original ColoredMNIST dataset~\citep{arjovsky2019invariant} by first taking only the first 6 digits out of all 10 digits, then assigning $y < 3$ as 0 and $y \geq 3$ as 1. The training, validation, and test splits have 10000, 2000, and 27375 instances, respectively. The spurious correlation $\gamma$ is created by associating most of the instances in class 0 with red color and class 1 with green color. We set $\gamma=0.1$ for the training split and set $\gamma=0.5$ for the validation and test splits. we do not introduce any label corruption. This setting is considered much simpler than the original ColoredMNIST, primarily because we are studying with limited network capacity with a tiny CNN, which could not fit the original 10-digit classification with more than 90\% accuracy. 

\textbf{Baselines.} We compare \Ours{}, which is by default trained with last-layer retraining, with the full-parameter retraining variants. \Ours[HF] discards the Hessian, which is equivalent to MAPLE+LLR. \Ours[F-EH] retrains the entire network with the influence function and no approximation on the Hessian. We use Conjugate Gradients to calculate the inverse Hessian-vector product (iHVP) instead of explicitly calculating the Hessian to save the memory. \Ours[F-AH] approximates the Hessian using LiSSA~\citep{agarwal2017second} with depth=1000 and repeat=1. This approximation calculates the iHVP with 1000 samples from the training set. \Ours[F-HF] which discards the Hessian, and this uses the sample gradients as MAPLE. 

\begin{table}[t]
    \centering
    \caption{Performance comparison on ColoredMNIST-S Dataset with TinyCNN.}
\begin{tabular}{c c cc c}
\hline\noalign{\smallskip}
\multirow{2}{*}{\textbf{Method}} & \multicolumn{2}{c}{\textbf{ColoredMNIST-S}} & \multirow{2}{*}{\textbf{Wall-Clock Time}} \\ \cline{2-3}
\\[-3mm]
& Worst(\%) & Mean(\%)
\\[1mm] 
\hline\\[-3mm]
      \Ours[F-EH] & $\mathbf{94.1}$ & $96.0$ & 400 \\
      \Ours[F-AH] & $80.2$ & $97.6$ & 45\\
      \Ours[F-HF] & $91.9$ & $96.9$ & 42\\ 
      \Ours{} & $91.2$ & $95.4$ & 4+0.8\\ 
      \Ours[HF] & $91.1$ & $96.2$ & 4+0.7\\ 
      ERM & $81.2$ & $97.7$ & 4\\ 
\hline
\end{tabular}
\label{tab:cmnist-s}
\end{table}

The results are shown in Table~\ref{tab:cmnist-s}. we can observe that. Full-parameter retraining with Hessian (i.e., \Ours[F-EH]) is advantageous over all other approaches. However, approximating the Hessian with low quality might hurt the performance, since \Ours[F-AH] underperforms other baselines. When using a small neural network model, last-layer retraining (LLR) can experience slight performance degradation compared to retraining the whole model on the entire dataset (since we have the capacity to do so), but it is still significantly better than ERM. Overall, these results suggest that we should consider full-parameter retraining when we have the capacity, and LLR is an efficient and effective approximation to it.

\subsection{Challenges of Scaling up Full-parameter Retraining with \Ours{}}
When scaling up full-parameter retraining to more realistic datasets with larger networks, the computational challenges for accurately calculating the sample weight gradients are indeed the major obstacle here because of the Hessian. Let $n$ be the number of data points and $p$ be the number of parameters. Directly calculating the Hessian for the whole network is both computationally expensive $O(np^2+p^3)$ and memory bottlenecked $O(p^2)$. In particular, the Hessian of a simple neural network with 1M parameters takes 4TB of memory, and ResNet-50 has 23M trainable parameters. Alternatively, we can calculate the inverse Hessian-vector product (iHVP) to reduce the memory requirements to $O(p)$ and the time complexity to $O(np^2)$, but it is still time-consuming, unless we apply further approximations. Table~\ref{tab:full-parameter-retraining} shows the computational challenges of performing full-parameter retraining with GSR on the Waterbirds dataset.
\begin{table}[t]
    \centering
    \caption{Running time and memory consumption of full-parameter retraining on Waterbirds dataset with ResNet-50.}
    \begin{tabular}{c|c|c}
         Approach	& Time (One-iteration)	& VRAM Consumption \\
         \hline
         Explicit Hessian	&>18h&	2PB \\
Conjugate Gradients	&18h&	9GB \\
LiSSA	&3.6h&	9GB \\
\hline
    \end{tabular}
    \label{tab:full-parameter-retraining}
\end{table}

It will be extremely costly to perform the experiments that optimize the sample weights for a few iterations (e.g., 50 iterations, as done with LLR). Besides, as shown in the previous ablation study, training the full network with crudely approximated Hessian (using LiSSA) causes inaccurate gradient estimation and may even hurt the performance. We believe these technical challenges require extensive exploration in future works and are beyond the scope of this paper.

\section{Comparison with MAPLE}
Although the difference seems to be only about the Hessian, \Ours{} and MAPLE are based on two distinct ways to solve the bilevel optimization problem.
\begin{enumerate}
    \item MAPLE directly approximates backpropagating through the training trajectory by one-step truncation, but \Ours{} uses implicit differentiation that results in the Hessian. This calculation is \textit{exact} when the objective is strongly convex.
    \item \Ours{} also synergies with last-layer retraining, which was not considered by MAPLE.
    \item We use an adaptive aggregation of the influence scores from the groups as the gradient, but MAPLE does not explicitly design group-level aggregation for subpopulation shifts.
    \item MAPLE additionally uses coreset selection by training on a subset of all the training data in each step to speed up the training, while \Ours{} utilizes the full training data without approximation. 
\end{enumerate}

\section{Additional Dataset for Subpopulation Shift}
To further demonstrate the advantage of \Ours{}, we include an additional experiment with the \textit{MetaShift}~\citep{liang2022metashift} image classification dataset. We use the two-class version implemented by~\citet{yang2023change}, where the spurious correlation exists between the class and the background. In particular, cats are mostly indoors and dogs are mostly outdoors. We primarily compare with DFR, which uses the same amount of group labels and the same ERM feature extractor as ours. The ERM model is trained for 100 epochs without early stopping or hyperparameter tuning. \Ours{} outperforms DFR convincingly in this case as shown in Table~\ref{tab:metashift}.
\begin{table}[t]
    \centering
    \caption{Performance comparison on MetaShift.}
\begin{tabular}{c c cc}
\hline\noalign{\smallskip}
\multirow{2}{*}{\textbf{Method}} & \multicolumn{2}{c}{\textbf{MetaShift}} \\ \cline{2-4}
\\[-3mm]
& Worst(\%) & Mean(\%)
\\[1mm]\hline\\[-3mm]
      \Ours{} & $\mathbf{78.2}_{\pm 4.0}$ & $87.9_{\pm 4.1}$ \\ 
      DFR & $71.4_{\pm 1.6}$ & $89.7_{\pm 0.2}$ \\
      ERM & $69.2$ & $90.3$ \\ 
\hline
\end{tabular}
\label{tab:metashift}
\end{table}

\section{Additional Empirical Study}
\subsection{What if the ERM-learned Representations are Insufficient?}
To simulate the situation where ERM fails to learn sufficiently good representations, we deliberately remove the minority groups from the training data that are used for representation learning, then followed by GSR or DFR with LLR. We experimented with two modified datasets: Waterbirds- and MNLI-, where the spurious correlations are more severe ($\approx$ 1). We only compare with DFR, because other baselines (such as JTT, Group DRO) will not work since they cannot use the minority-group data from training. We report the change in performance in Table~\ref{tab:no-minority} (the number in brackets indicates the performance change compared to learning the representations with the minority groups).
\begin{table}[t]
    \centering
    \caption{Performance comparison on modified datasets where the spurious correlations are extremely strong so that the original minority groups no longer exist.}
    \begin{tabular}{c|c|c}
        & Waterbirds-&	MNLI- \\
        \hline
GSR	&$87.8_{\pm 0.1} (-5.1)$&	$\mathbf{74.2}_{\pm 0.1} (-4.3)$\\
DFR	&$\mathbf{89.1}_{\pm 0.5} (-3.8)$&	$66.9_{\pm 0.2} (-7.8)$\\
ERM	&$32.6 (-42.3)$&	$9.0 (-56.9)$ \\
\hline
    \end{tabular}
    \label{tab:no-minority}
\end{table}
Clearly, when the representation quality degrades, the performance drops across different methods. In particular, ERM performs much worse than random guesses for the worst group. Nonetheless, some useful representations are still learned and can be efficiently utilized by last-layer retraining guided by some minority data.
\end{document}